\definecolor{blue}{RGB}{0,114,178}
\definecolor{red}{RGB}{213,94,0}
\definecolor{green}{RGB}{0,158,115}
\definecolor{purple}{RGB}{204,121,167}
\definecolor{orange}{RGB}{230,159, 0}
\definecolor{pink}{RGB}{204,121,167}
\newtheorem{theorem}{Theorem}
\newtheorem{lemma}[theorem]{Lemma}
\newcommand\indep{\protect\mathpalette{\protect\independenT}{\perp}}
\def\independenT#1#2{\mathrel{\rlap{$#1#2$}\mkern2mu{#1#2}}}
\newcommand{\E}{\mathop{\mathbb{E}}}
\newcommand{\var}{\mathop{\mathrm{Var}}}
\newcommand{\D}{\mathcal{D}}
\newcommand{\x}{\mathbf{x}}
\newcommand{\ti}{\mathrm{t}}
\newcommand{\y}{\mathrm{y}}
\newcommand{\rt}{\mathrm{r}_{\ti}}
\newcommand{\X}{\mathbf{X}}
\newcommand{\Y}{\mathrm{Y}}
\newcommand{\Yzero}{\Y^0}
\newcommand{\Yone}{\Y^1}
\newcommand{\Yt}{\Y^\ti}
\newcommand{\T}{\mathrm{T}}
\newcommand{\ui}{\mathrm{u}}
\newcommand{\Prob}{P}
\newcommand{\w}{\bm{\omega}}
\newcommand{\W}{\bm{\Omega}}
\newcommand{\rht}{\widehat{\mathrm{r}}_{\ti}}
\newcommand{\muh}{\widehat{\mathrm{\mu}}}
\newcommand{\sigh}{\widehat{\mathrm{\sigma}}}
\newcommand{\et}{e_{\ti}}
\newcommand{\eh}{\widehat{e}}
\newcommand{\ehw}{\eh_{\w}}
\DeclarePairedDelimiter\abs{\lvert}{\rvert}%
\icmltitlerunning{Quantifying Ignorance in Individual-Level Causal-Effect Estimates under Hidden Confounding}
\begin{document}

\twocolumn[
\icmltitle{Quantifying Ignorance in Individual-Level Causal-Effect Estimates under Hidden Confounding}



\icmlsetsymbol{equal}{*}

\begin{icmlauthorlist}
    \icmlauthor{Andrew Jesson}{oat}
    \icmlauthor{Sören Mindermann}{oat}
    \icmlauthor{Yarin Gal}{oat}
    \icmlauthor{Uri Shalit}{tech}
\end{icmlauthorlist}

\icmlaffiliation{oat}{OAMTL, University of Oxford}
\icmlaffiliation{tech}{Machine Learning and Causal Inference in Healthcare Lab, Technion -- Israel Institute of Technology}

\icmlcorrespondingauthor{Andrew Jesson}{andrew.jesson@cs.ox.ac.uk}

\icmlkeywords{Causal Inference, Bayesian Deep Learning, Conditional Average Treatment Effect, Heterogeneous Treatment Effect, CATE, BDL, Hidden Confounding, Uncertainty, Epistemic Uncertainty, Sensitivity Analysis}

\vskip 0.3in
]



\printAffiliationsAndNotice{}  

\begin{abstract}
We study the problem of learning conditional average treatment effects (CATE) from high-dimensional, observational data with unobserved confounders. 
Unobserved confounders introduce ignorance---a level of unidentifiability---about an individual's response to treatment by inducing bias in CATE estimates.
We present a new parametric interval estimator suited for high-dimensional data, that estimates a range of possible CATE values when given a predefined bound on the level of hidden confounding.
Further, previous interval estimators do not account for ignorance about the CATE associated with samples that may be underrepresented in the original study, or samples that violate the overlap assumption. 
Our interval estimator also incorporates model uncertainty so that practitioners can be made aware of such out-of-distribution data.
We prove that our estimator converges to tight bounds on CATE when there may be unobserved confounding and assess it using semi-synthetic, high-dimensional datasets.
\end{abstract}

\section{Introduction}

How will a patient's health be affected by taking a given medication \cite{Criado-Perez2020invisible}? 
How will a job seeker's employment be affected by participating in a training program? 
How will a user's question be answered by a search recommendation \citep{noble2018algorithms}?
Making effective personalized recommendations depends on being able to answer such questions.
Answering such questions requires knowledge about the causal effect that a treatment or intervention (medication, training program, search result) has on a person.
And knowing the effect of the treatment requires knowledge about the individual.

Randomized controlled trials (RCTs) are the gold standard for discovering population-level causal effects of such treatments. 
However, in many cases, RCTs are prohibitively expensive or unethical. 
For example, researchers cannot randomly prescribe smoking to assess health risks. 
Observational data, often with larger sample sizes, lower costs, and more relevance to the target population, offer an alternative way to learn about individual-level causal effects. 
The price paid for using observational data, however, is lower certainty in the estimated causal effects. 

When there is sufficient knowledge about both the population and the individual, inferring the individual's response to treatment is possible, and corresponding recommendations can be made with relative certainty. 
A widely used quantity expressing an individual's response to treatment is the Conditional Average Treatment Effect (CATE), which is defined in the next section.

\begin{figure*}[ht!]
    \centering
    \begin{subfigure}{0.24\textwidth}
        \centering
        \includegraphics[width=\linewidth]{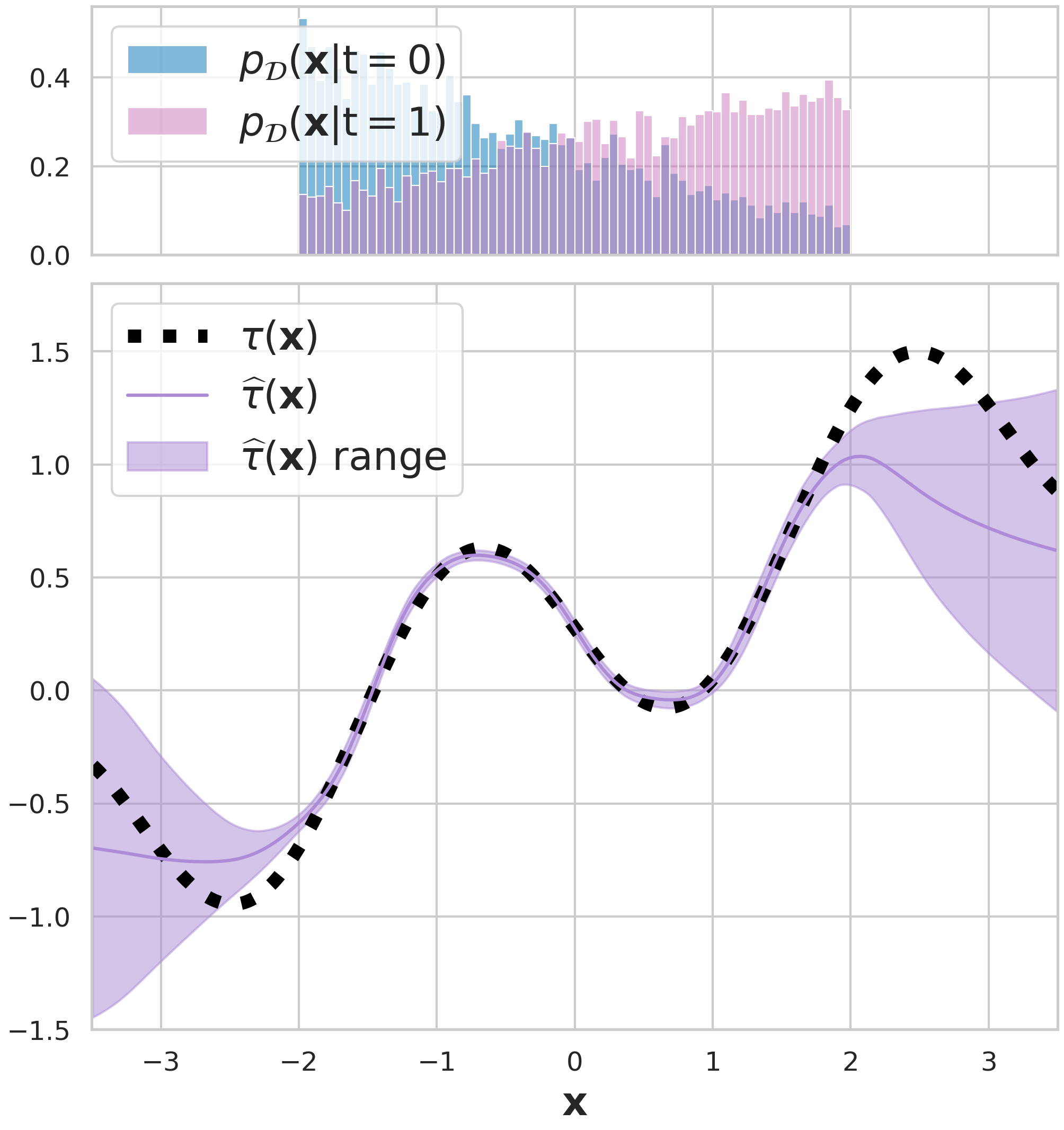}
        \caption{similarity}
        \label{fig:representation}
    \end{subfigure}
    \begin{subfigure}{0.24\textwidth}
        \centering
        \includegraphics[width=\linewidth]{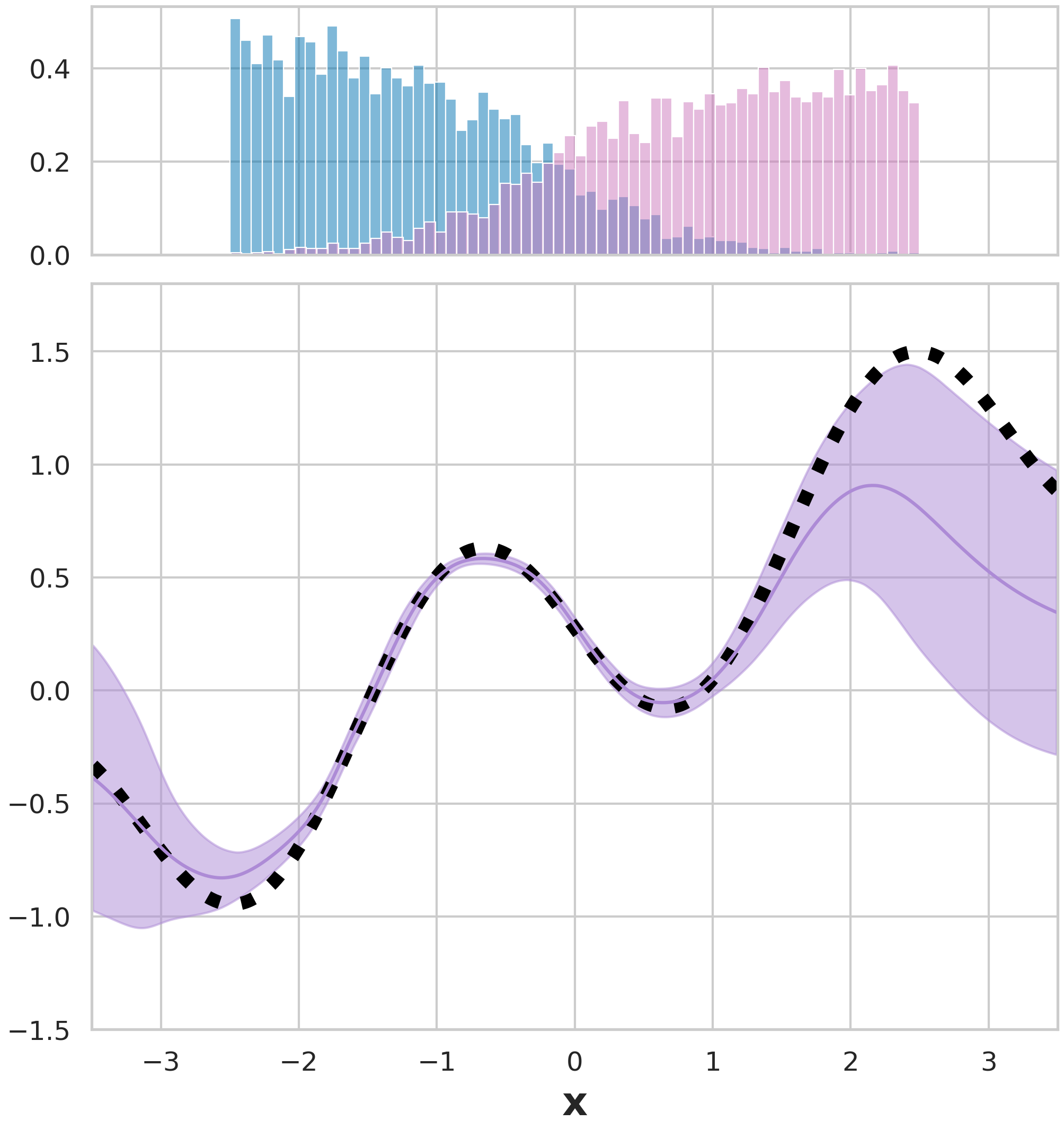}
        \caption{overlap}
        \label{fig:overlap}
    \end{subfigure}
    \begin{subfigure}{0.24\textwidth}
        \centering
        \includegraphics[width=\linewidth]{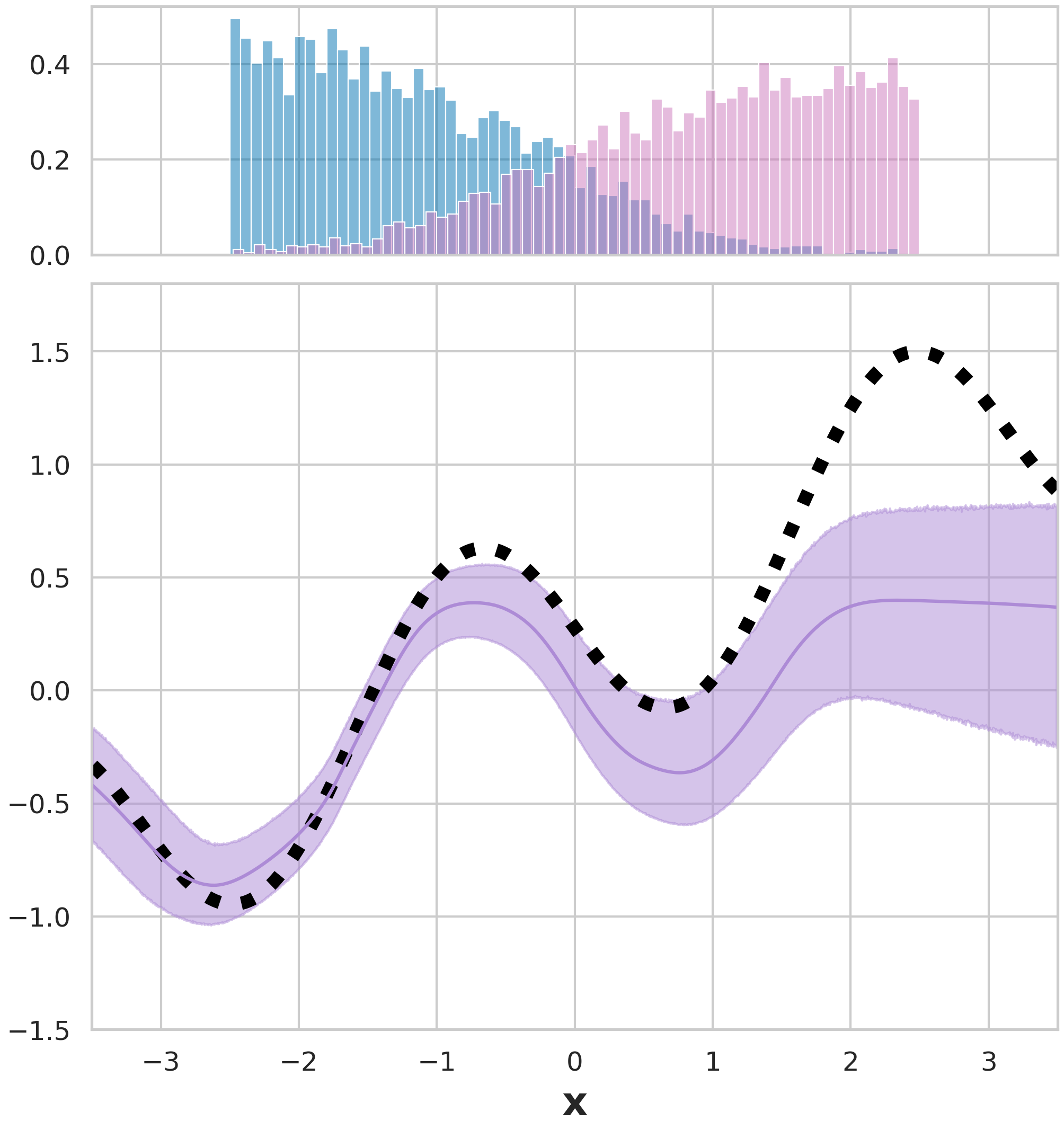}
        \caption{context}
        \label{fig:context}
    \end{subfigure}
    \begin{subfigure}{0.24\textwidth}
        \centering
        \includegraphics[width=\linewidth]{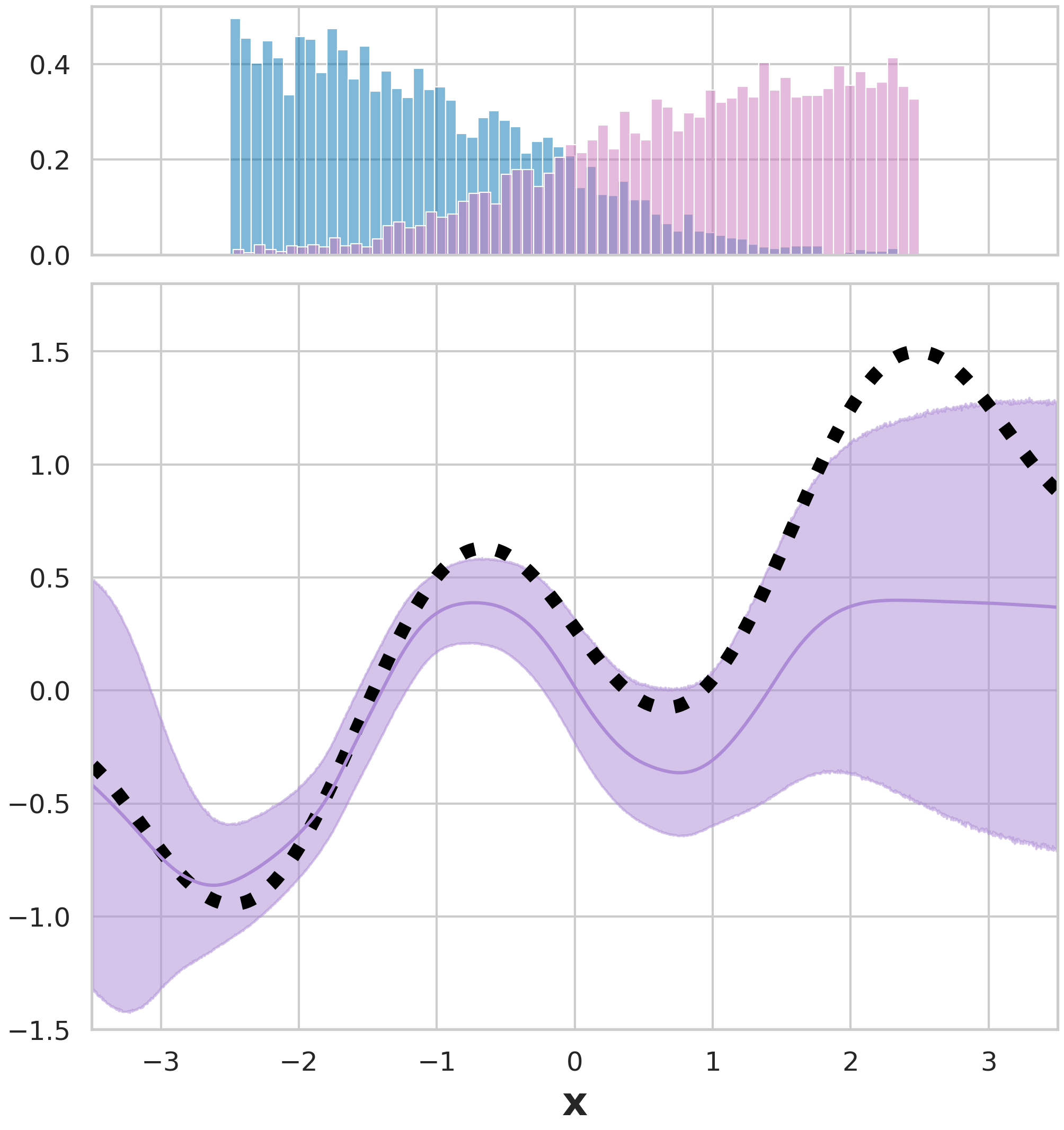}
        \caption{unified}
        \label{fig:unified}
    \end{subfigure}
    \caption{The purple shaded areas in the lower panes depict regions of ignorance about a unit's response to treatment.
    The training data density for the untreated and treated groups are shown in the upper panes.
    (\ref{fig:representation}) For ignorance due to \textbf{insufficient similarity}, the region should get wider as the distance between $\x$ and the training data increases. 
    (\ref{fig:overlap}) For ignorance due to \textbf{insufficient overlap} the region should get wider as $P(\T=0 \mid \x)$ or $P(\T=1 \mid \x) \to 1$. 
    (\ref{fig:context}) Ignorance due to \textbf{insufficient context} results in an arbitrarily biased CATE estimator $\widehat{\tau}(\x)$, hence the discrepancy between the blue solid line and the black dashed line. Therefore, the ignorance region should include the true CATE $\tau(\x)$ on the training data manifold where overlap is satisfied.
    (\ref{fig:unified}) All sources of ignorance jointly.}
    \label{fig:thebigone}
\end{figure*}

There are, however, many reasons why we would not know enough about someone to make an informed recommendation. 
For example, there may be \textbf{insufficient similarity}: when an individual is unrepresented in the study population, which can be the case if the data comes from a small study or just one hospital. 
There may also be \textbf{insufficient overlap} (ubiquitous, especially for high-dimensional data \citep{d2020overlap}): when an individual lacks representation in either the treatment or control group, which can be the case if there are socio-economic barriers to accessing treatment.
Finally, there may be \textbf{insufficient context}: when there are unobserved factors (confounders) that influence both an individual's odds of receiving treatment, as well as their outcome.

When confronted by such ignorance about a person's response to treatment, recommending treatments based on a model's point estimate of the CATE can be dangerous - doubly so in high-stakes domains such as health care. 
Instead, it may be preferable to \textit{defer} the recommendation when the CATE estimate is uncertain: this might entail consulting with a domain expert, using a safe default treatment, collecting additional data on subjects similar to the one in question, or broadening the context of the study by incorporating additional confounding covariates.

In this paper, we provide a measure of ignorance that unifies all three of the above sources  (similarity, overlap, and context), which is expressed as an interval of possible CATE values for each individual.
The width of the interval increases as the assumptions underlying each source are challenged more severely.
When CATE estimates are used to recommend treatment (e.g. ``treat if and only if CATE is positive''), we envision the ignorance interval as being used to \emph{defer} the decision: it might be better not to give a recommendation at all rather than give a highly uncertain one.

We take Bayesian deep learning as a starting point: such methods provide multiple functions to explain observed data (illustrated in Figure \ref{fig:functions}), functions that tend to agree with one another for well-represented data but disagree with one another where data is under or unrepresented. 
Thus, Bayesian methods can be used in quantifying ignorance due to \textbf{insufficient similarity and overlap} by measuring \emph{epistemic uncertainty} (the disagreement between functional predictions of the outcome), which has been used in the context of CATE estimates by \citet{jesson2020identifying}. 

That leaves us with ignorance due to \textbf{insufficient context}, also known as unobserved confounding.
Unobserved confounding manifests as unexplained variance in the estimates of both the outcome and the individual's propensity for treatment and induces a bias in the estimates of causal effects.
Standard Bayesian methods account for unexplained variance in the outcome, known as \emph{aleatoric uncertainty}; however, without further assumptions, it is in general impossible to identify which part of this uncertainty is due to confounding \citep{pearl2009causal}.
Therefore, we turn to causal sensitivity analysis to quantify the ignorance in causal-effect estimates due to the bias induced by hidden confounding.
Causal sensitivity analysis includes a diverse family of frameworks, whose common goal is to give bounds on the treatment-effect under the assumption of some ``level'' of unobserved confounding, either at a population level \citep{rosenbaum1983Assessing,robins2000sensitivity,imbens2003Sensitivity,rosenbaum2014sensitivity,dorie2016flexible,franks2019flexible,veitch2020sense} or at the level of individuals \citep{yadlowsky2018bounds, pmlr-v89-kallus19a}.

Specifically, we build on recent work by \citet{pmlr-v89-kallus19a} and introduce a novel method that can scale to large-sample, high-dimensional data, and convey information about \emph{all three sources of ignorance} mentioned above. 
In section \ref{sec:interval_estimator} we present a new functional interval estimator that predicts a range of possible CATE values when given a bound on the influence of hidden confounding. 
We prove that our estimator converges to tight bounds on CATE for a given bound on hidden confounding. 
In section \ref{sec:uncertainty} we present a CATE interval estimator integrating all sources of uncertainty mentioned above. 
In section \ref{sec:experiments} we demonstrate that our new method scales to high-dimensional data by evaluating it on existing benchmarks and introducing a new high-dimensional dataset.

\section{Sources of Ignorance in Causal Inference}
\label{sec:background}

In this section we formalize the idea of being ignorant about an individual and their response to treatment by framing it as a violation of one or more of the requisite assumptions needed to identify treatment-effects.

The individual's response to treatment is formally known as the \emph{individual treatment effect} or ITE. 
The ITE of a binary treatment $\T \in \{0, 1\}$ on an individual $i$ is the difference in potential outcomes $\Yone_i - \Yzero_i$. 
The potential outcome $\Yone_i$ describes the outcome were the individual $i$ treated, whereas the potential outcome $\Yzero_i$ describes the outcome were they not treated. 
The ITE is a fundamentally unobservable quantity since it is only possible to measure one potential outcome for a given individual. 
However, when individuals are described by a set of covariates $\X \in \mathcal{X} \subseteq \mathbb{R}^d$, then we can model the Conditional Average Treatment Effect (CATE) \citep{abrevaya2015cate},
$\tau(\x) = \E[\Yone - \Yzero \mid \X = \x] 
        = \E[\Yone \mid \X = \x] - \E[\Yzero \mid \X = \x]$,
which is the expected difference in potential outcomes over units (possibly individuals) who share the same measured covariates $\X = \x$.

The estimation of $\tau(\x)$ relies on an observational dataset $\D = \{(\x_i, \ti_i, \y_i): i = 1, \dots n\}$. 
From such data, the expected potential outcome $\E[\Yt \mid \X = \x]$ is identifiable as the conditional expectation over observed outcomes $\mu_{\ti}(\x) \equiv \E[\Y | \T = \ti, \X = \x]$ \citep{rubin1974estimating}  under the following assumptions:
\begin{enumerate}
    \item $(\x_i, \ti_i, \y_i)$ are i.i.d. draws from the same population $P_{\D}(\X, \T, \Yzero, \Yone)$. \label{a:representation}
    \item Overlap (Positivity): \\ $e_{\ti}(\x)  \equiv \Prob(\T = \ti \mid \X =\x) > 0: \ti \in \{0, 1\}$. \label{a:overlap}
    \item Unconfoundedness (Exchaneability, Sufficiency, Exogeneity): $\{(\Yzero, \Yone) \indep \T\} \mid \X$. \label{a:unconfoundedness}
\end{enumerate}

A further assumption which is not our focus here is the stable unit treatment value assumption which briefly stated means that each unit's observed outcome corresponds exactly and only to its treatment assignment. 
That is, for an individual $i$ we observe the outcome $y_i = t_i \Yone_i + (1-t_i)\Yzero_i$.
When these assumptions hold, the CATE for individuals sharing the same measured covariates $\X = \x$ is given by
\begin{equation}
    \tau(\x) = \mu_{1}(\x) - \mu_{0}(\x).
    \label{eq:cate_hat}
\end{equation}

In practice, an estimator $\widehat{\tau}(\x)$ for $\tau(\x)$ is learned from a finite dataset, and ignorance about an individual's response to treatment is due to both observational data being finite and possible violations of the above assumptions.

First, the dataset $\D$ is a finite sample from $P_{\D}(\X, \T, \Yzero, \Yone)$ of size $n$, so there is \textbf{limited similarity} -- for a test point $\x^*$ there might not be any similar train points $\x$. Furthermore, test samples might come from a different marginal distribution $P_{\D'}(\X)$ than the one the training dataset is drawn from, i.e. covariate shift, a scenario which violates Assumption \ref{a:representation}.
Figure \ref{fig:representation} illustrates such violations of Assumption \ref{a:representation}. 
The range of $\widehat{\tau}(\x)$ (purple shaded areas) should be tight around values of $\x$ that are observed in $\D$ and get wider for individuals described by $\x$ that are not.

Second, the treatment assignment may be such that for units described by covariates $\X = \x$, the observed treatment indicator $\T$ is all $0$ or all $1$, so there is \textbf{limited overlap} \citep{d2020overlap}. For example, a given test point $\x^*$ may have similar points in the train set with treatment assignment $T=0$ but none with $T=1$. Therefore, we cannot accurately estimate $\x^*$'s response under $T=1$. Such violations of the overlap assumption are especially common for high-dimensional covariates which likely contain ample information to predict the treatment (Assumption \ref{a:overlap}).
Figure \ref{fig:overlap} illustrates such violations of the \hyperref[a:overlap]{overlap} assumption. 
Here, overlap is not satisfied at the left and right edges of the data.
Therefore, the uncertainty for $\widehat{\tau}(\x)$ should be tight around values of $\x$ for which there are both treated and untreated examples (darker area in top pane, $-2 \leq \x \leq 1.5$) and get wider around values of $\x$ where there are only either treated $(P(\T=1 \mid \x) \to 1 \colon \x > 1.5)$ or untreated examples $(P(\T=0 \mid \x) \to 1 \colon \x < 2)$.

Third, there is \textbf{limited context} about the individual ($\mathcal{X}$ is only $d$-dimensional). For a point $\x^*$ we might not have enough context to correctly estimate its true response under one or both treatments $T$. This is especially important if treatment in the train set was assigned based on an unobserved factor which also affects the outcome $Y$, which is a violation of Assumption \ref{a:unconfoundedness}. 
Figure \ref{fig:context} illustrates such violations of the \hyperref[a:unconfoundedness]{unconfoundedness} assumption.
Such violations result in $\widehat{\tau}(\x)$ (blue solid line) being a biased estimator of the true CATE (black dotted line). 
The bias is induced here by having the probability of treatment and the outcome be affected by a confounding variable $\ui$, which is not included in the set of covariates $\x$ given to the estimator $\widehat{\tau}(\x)$.

A \emph{unified} measure of uncertainty would correspond to the width of the range of CATE values that accounts for all of the above sources of ignorance in the estimate of $\widehat{\tau}(\x)$, for all values of $\x$, as illustrated in Figure \ref{fig:unified}.

\section{Proposed Method}
We first introduce the ideas which are needed to develop our approach: how to evaluate epistemic uncertainty for CATE using Bayesian deep learning \citep{jesson2020identifying}, and a method for expressing violations of \hyperref[a:unconfoundedness]{unconfoundedness} assumption \citep{pmlr-v89-kallus19a} in the context of CATE estimation. We then develop our novel proposed estimator. 
\subsection{Preliminaries}
\subsubsection{Quantifying Ignorance due to Insufficient Similarity and Overlap}

The expectations in Equation \ref{eq:cate_hat} are typically expressed using parametric \citep{robins2000marginal,tian2014simple,shalit2017estimating} or non-parametric models \citep{hill2011bayesian,xie2012estimating,alaa2017bayesian,gao2020minimax}. 
Parametric models assume predictions are generated from $p(\Y \mid \x, \ti, \w)$, the conditional distribution over outcomes $\Y$ given covariates $\x$, treatment $\ti$, and parameters $\w \in \mathcal{W}$.
A common choice for continuous $\Y$ is a Gaussian distribution with density, 
\begin{equation}
    f(\y \mid \x, \ti, \w) = \mathcal{N}\left(\y \mid \widehat{\mu}_{\ti}(\x; \w),  \sigma_{\ti}^2(\x; \w) \right) ,
    \label{eq:gaussian}
\end{equation}
which assumes that $\y$ is given by a deterministic function $\widehat{\mu}_{\ti}(\x; \w)$ with additive Gaussian noise scaled by $\sigma_{\ti}(\x; \w)$. 
For large, high dimensional datasets, neural networks yield suitable functional estimators $\widehat{\mu}_{\ti}(\x; \w)$ and $\widehat{\sigma}_{\ti}(\x; \w)$.
The mean function is then used to define a parametric CATE estimator,
$\widehat{\tau}(\x; \w) = \widehat{\mu}_{1}(\x; \w) - \widehat{\mu}_{0}(\x; \w)$.

Standard neural network optimization often seeks a single set of parameters $\w_{\mathrm{ML}}$ that maximize the likelihood of the observed data $\D$ under the model.
Therefore, it yields one prediction for novel observations $\x^*$, even when an $\x^*$ lies outside of those observed in $\D$, and so there is no way to discern whether $\x^*$ is in-distribution or out-of-distribution.

\begin{figure}[ht]
    \centering
    \includegraphics[width=0.95\linewidth]{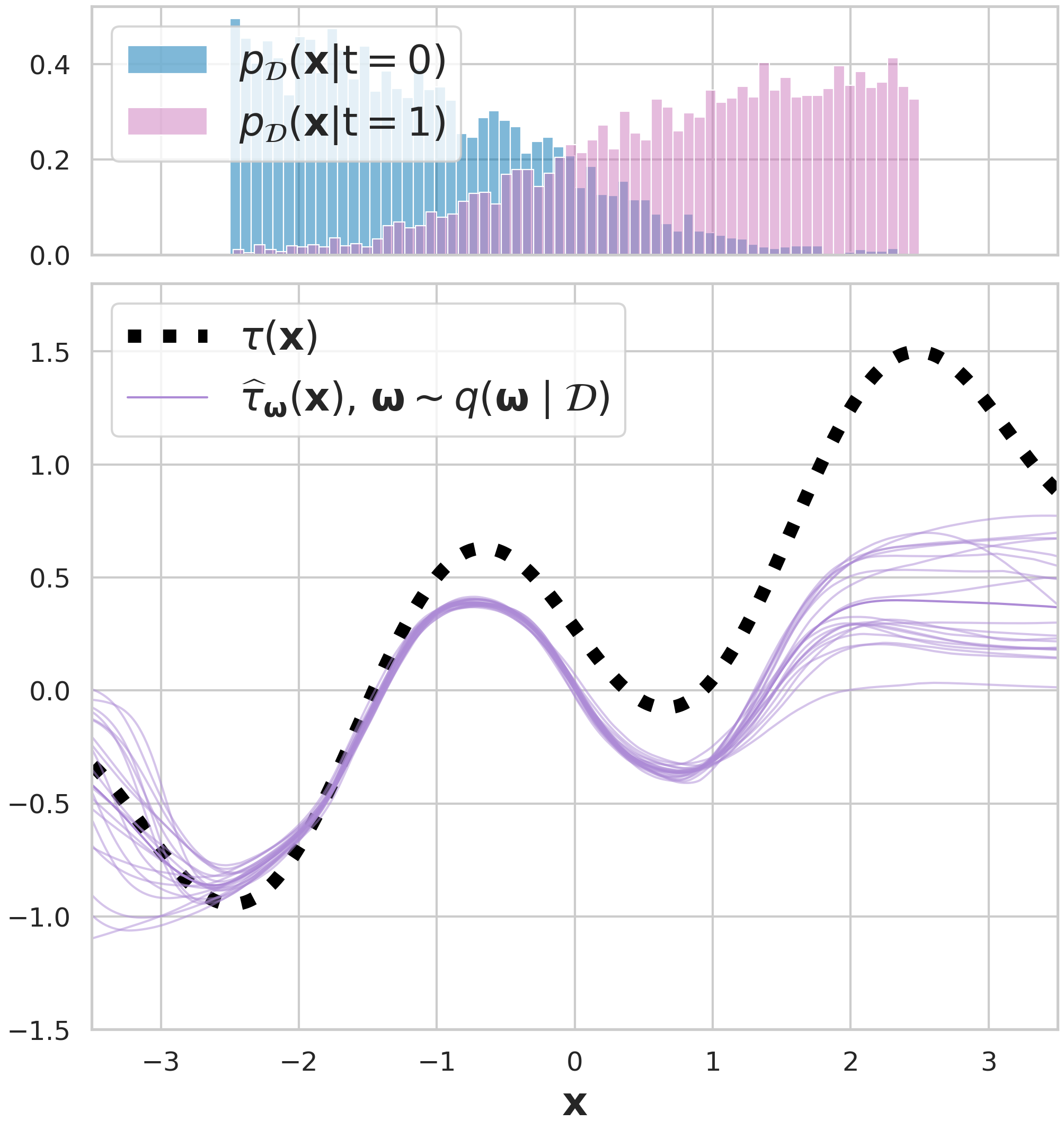}
    \vspace{-4mm}
    \caption{Samples from posterior over functions agree on the training data, but disagree off the training support. However the disagreement does not account for the bias induced by hidden confounding, hence the discrepancy between the purple samples from the model, and the true CATE $\tau(x)$ in the black dashed line.}
    \label{fig:functions}
\end{figure}

Bayesian Deep Learning (BDL), instead aims to generate samples from the posterior distribution of the parameters given the observed data $p(\W \mid \D)$, \emph{e.g.} from a variational approximation of the posterior $q(\W \mid \D)$ \citep{mackay1992practical, hinton1993keeping, barber1998ensemble, gal2016dropout}. 
Ideally, each sample $\w \sim q(\W \mid \D)$ induces a unique functional explanation that, given sufficient flexibility of the neural network, should predict $\y \in \D$. 
When these models work as intended, then for points $\x^*$ far away from the training set $\D$ the function values $\widehat{\mu}_{\ti}(\x^*; \w)$ will have high variance, hence by the law of total variance, so will $\widehat{\tau}(\x^*; \w)$. 
Figure \ref{fig:functions} illustrates how induced functions for the CATE $\widehat{\tau}(\x; \w)$ for different samples of $\w$ agree with one another on the training data, but disagree away from the training data. 
Indeed, in recent work \citet{jesson2020identifying} show that with high-dimensional data, BDL methods are effective at quantifying the uncertainty in CATE estimates arising from insufficient similarity and insufficient overlap. 

Non-parametric methods, such as Bayesian Additive Regression Trees (BART) \citep{hill2011bayesian} or Gaussian Processes (GPs) \citep{alaa2017bayesian} are also capable of expressing such uncertainty, but do not always scale well to big or high-dimensional data. 

While existing Bayesian methods are well suited to account for ignorance due to insufficient similarity and overlap, the approaches above were developed under Assumption \ref{a:unconfoundedness} (unconfoundedness) and so  cannot easily account for the bias in $\widehat{\tau}(\x)$ induced by insufficient context (hidden confounding). 
This is also illustrated in Figure \ref{fig:functions}. 
Specifically, note that even though the functions induced by sampled parameters agree with one another close to the training data, they are still biased away from the true CATE function. 
In order to relax Assumption \ref{a:unconfoundedness}, such ignorance must be accounted for by some other means, as we now discuss.

\subsubsection{Quantifying Ignorance due to Insufficient Context}

\begin{figure}[ht]
    \centering
    \includegraphics[width=1.0\linewidth]{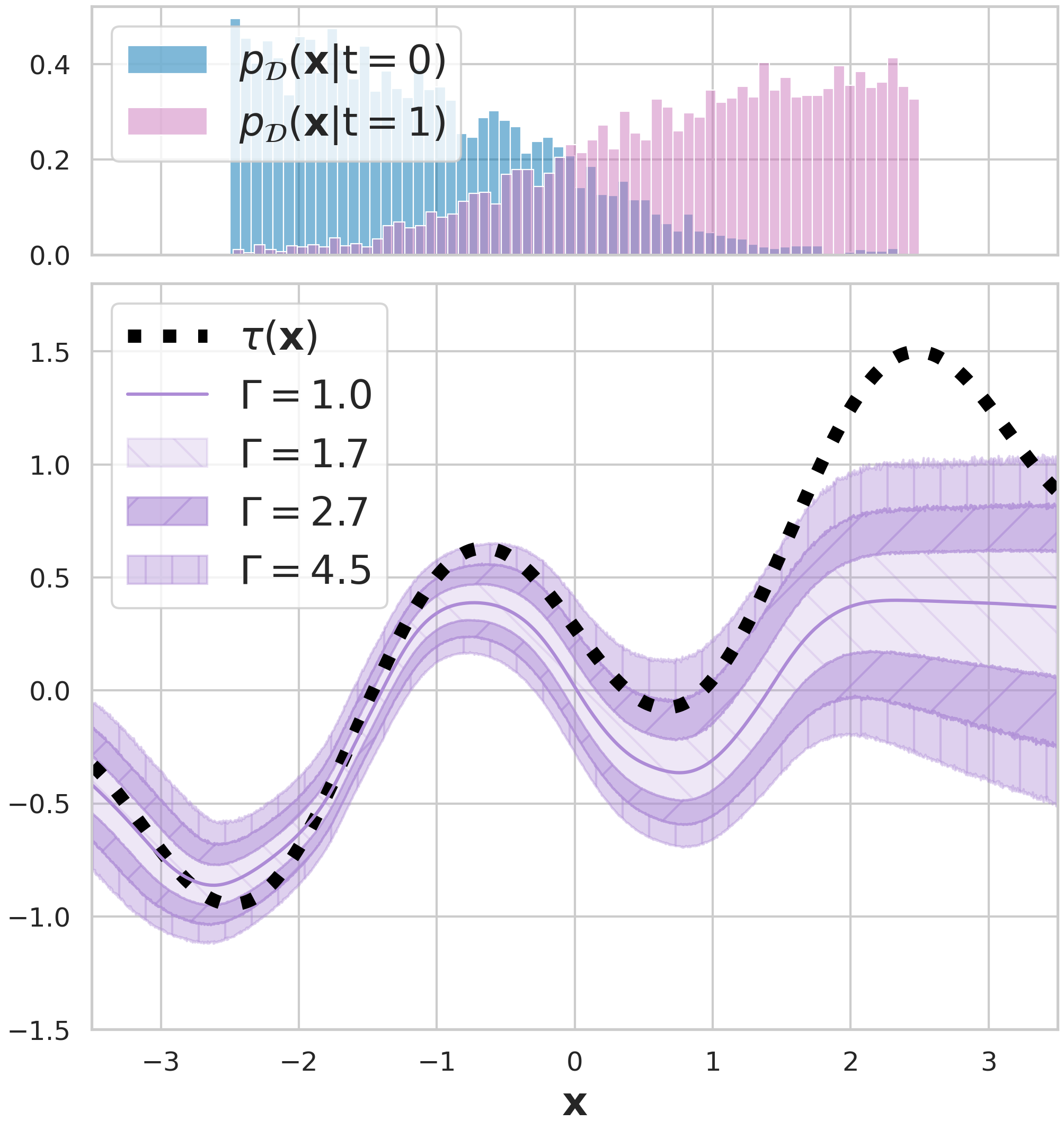}
    \vspace{-4mm}
    \caption{Varying $\Gamma$ for Marginal Sensitivity Model. Ground truth $\Gamma^* = 2.7$. While the bounds follow the true CATE $\tau(x)$ on the support of $p_{\D}(\x)$, they become nonsensical for out-of-distribution data ($\x < -2.5$ and $\x > 2.5$) and when there is a lack of overlap.}
    \label{fig:msm}
\end{figure}

When there is insufficient context, the unconfoundedness assumption $(\Yzero, \Yone) \indep \T \mid \X$ does not necessarily hold. The challenge in this case is to quantitatively express the degree of violation of this conditional independence. \citet{yadlowsky2018bounds} formulate a loss function using the sensitivity model of \citet{RosenbaumPaulR2002Os} for this purpose, but this requires fitting a new model every time the belief in the degree of violation changes. To overcome this limitation, we follow in the footsteps of  \citet{pmlr-v89-kallus19a} who use the Marginal Sensitivity Model (MSM) of \citet{tan2006msm}. 

Let $\et(\x) =  \Prob(\T = \ti \mid \X =\x)$ be the \emph{nominal} propensity score, and $\et(\x, \y) = \Prob(\T = \ti \mid \X =\x ,\Yt = \y)$ be the \emph{complete} propensity score. The complete propensity, being conditioned on the potential outcome, is by construction both unconfounded and unobserved. 
The MSM supposes that the odds of receiving treatment under the complete propensity $\frac{\et(\x, \y) }{ (1 - \et(\x, \y))}$ for individuals described by $\x$ differs from the odds of receiving treatment under the nominal propensity $\frac{\et(\x) }{(1 - \et(\x))}$ by at most a factor of $\Gamma$. That is,
\begin{equation*}
    \Gamma^{-1} \leq \frac{(1 - \et(\x))\et(\x, \y)}{\et(\x)(1 - \et(\x, \y))} \leq \Gamma.
\end{equation*}
As such $\Gamma > 1$ can be interpreted as a degree of supposed hidden confounding, whereas $\Gamma=1$ is equivalent to the \hyperref[a:unconfoundedness]{unconfoundedness} assumption.

In order to incorporate the MSM into a CATE bound, \citet{pmlr-v89-kallus19a} propose using the following factorization for the expectation of the potential outcome $\Yt$:

\begin{equation}
    \begin{split}
        \E[\Yt \mid \X = \x] &= \mu_{\ti}(\textcolor{purple}{w_{\ti}}; \x) \\ 
        &= \frac{\int \y \textcolor{purple}{w_{\ti}(\y \mid \x)} \textcolor{blue}{\et(\x) f(\y \mid \x, \ti)} d\y}{\int \textcolor{purple}{w_{\ti}(\y \mid \x)} \textcolor{blue}{\et(\x) f(\y \mid \x, \ti)} d\y}.
    \end{split}
    \label{eq:unbiased_mu}
\end{equation}

Equation \eqref{eq:unbiased_mu} expresses the unbiased conditional expectation of the potential outcome in terms of the \textcolor{purple}{unidentifiable} inverse complete propensity $\textcolor{purple}{w_{\ti}(\y \mid \x) = 1 / \et(\x, \y)}$ and the \textcolor{blue}{identifiable} nominal propensity \textcolor{blue}{$\et(\x)$} and conditional density \textcolor{blue}{$f(\y \mid \x, \ti)$} of the outcome.

The MSM can then be used to define an ignorance set that includes all possible values of $w_{\ti}(\y \mid \x)$ that would violate unconfoundedness by no more than $\Gamma$, that is
\begin{equation*}
    \mathcal{W}_{\ti}(\x; \Gamma) = \left\{ w_{\ti}: w_{\ti}(\y \mid \x) \in [\alpha_{\ti}(\x; \Gamma), \beta_{\ti}(\x; \Gamma)]  \forall \y \right\},
\end{equation*}
 where $\alpha_{\ti}(\x; \Gamma) = \frac{1}{\Gamma \et(\x)} + 1 - \frac{1}{\Gamma}$, and $\beta_{\ti}(\x; \Gamma) = \frac{\Gamma}{\et(\x)} + 1 - \Gamma$. 
 Given the set $\mathcal{W}_{\ti}(\x; \Gamma)$ expressing bounded violations of unconfoundedness, \citep{pmlr-v89-kallus19a} suggest upper and lower bounds on the CATE as follows: $\overline{\tau}(\x; \Gamma) = \overline{\mu}_{1}(\x; \Gamma) - \underline{\mu}_{0}(\x; \Gamma)$, and 
        $\underline{\tau}(\x;~\Gamma)~=~\underline{\mu}_1(\x;\Gamma)~-~\overline{\mu}_0(\x;\Gamma)$,
where
\begin{subequations}
    \begin{align}
        \underline{\mu}_{\ti}(\x; \Gamma) &= \inf_{w_{\ti} \in \mathcal{W}_{\ti}(\x; \Gamma)} \mu_{\ti}(w_{\ti}; \x). \label{eq:mu_inf} \\
        \overline{\mu}_{\ti}(\x; \Gamma) &= \sup_{w_{\ti} \in \mathcal{W}_{\ti}(\x; \Gamma) } \mu_{\ti}(w_{\ti}; \x) \label{eq:mu_sup} 
    \end{align}
    \label{eq:mu_bounds}
\end{subequations}
Taken together this gives an ignorance interval 
\begin{equation}\label{eq:Tinterval}
    \mathcal{T}(\x; \Gamma) = \left[\underline{\tau}(\x; \Gamma), \overline{\tau}(\x; \Gamma)\right].
\end{equation}
The ignorance interval $\mathcal{T}(\x; \Gamma)$ is completely defined with respect to identifiable estimands. For example, the likelihood in equation \eqref{eq:gaussian} can be used to model the density $f(\y \mid \x, \ti)$ and a parametric model with Bernoulli likelihood,  $P(\T = \ti \mid \x, \w) = \mathrm{Bern}\left(\ti \mid \ehw(\x)\right)$, can be used to model the identifiable nominal propensity for treatment $\et(\x)$.

\citet{pmlr-v89-kallus19a} uses a non-parameteric kernel based method and discrete line search to learn a function that maps $x$ to the identifiable CATE intervals: $\mathcal{T}(\x; \Gamma)$.
Figure \ref{fig:msm} illustrated the bounds given by such a model for given assumptions on $\Gamma$. 

For average treatment effects, there are two approaches for interpreting the bounds on $\tau(\x)$ \citep{tan2006msm}. 
One approach seeks the smallest value $\Gamma_s$ such that the interval $[\underline{\tau}(\x; \Gamma_s), \overline{\tau}(\x; \Gamma_s)]$ crosses 0. This approach then reports that the CATE becomes sensitive to hidden confounding at $\Gamma_s$. 
The other approach sets a cutoff $\Gamma_c$ and examines how the CATE changes for plausible $\Gamma$ values below $\Gamma_c$.

There are two main limitations of the approach of \citep{pmlr-v89-kallus19a} that this paper seeks to address. First, as is evident in the regions of $\x$ that lie out of distribution $(\x < -2.5 \quad \text{or} \quad \x > 2.5)$, the bounds become nonsensical (as expected), and there is no way to identify that a measurement $\x$ is actually out of distribution; more generally, it does not account for sources of ignorance other than unconfoundedness. Second, the method does not scale well computationally to large sample sizes, and does not scale well statistically to high-dimensional datasets as it relies on weighted kernel regression to estimate the outcome. We will now propose a method for incorporating parametric models (including BDL models) instead of the non-parametric one proposed in \citep{pmlr-v89-kallus19a}, thus enabling better scaling to high-dimensional and large-sample setting, while at the same time also accounting for all sources of ignorance.

\subsection{Estimating Bounds on $\widehat{\tau}(\x)$ for a Fixed Degree of Hidden Confounding, $\Gamma$} \label{sec:interval_estimator}

We start by developing a parametric interval estimator for $\mathcal{T}(\x; \Gamma)$ as defined in Eq. \eqref{eq:Tinterval}. 
Our parametric estimator for $\E[\Yt \mid \X=\x]$ under hidden confounding is based off of the following equivalent expression for Equation \eqref{eq:unbiased_mu}
\begin{equation*}
    \mu_{\ti}(w_\ti; \x) = \textcolor{blue}{\mu_{\ti}(\x)} + \frac{ \int \rt(\y; \x) \textcolor{purple}{w_\ti(\y \mid \x)} \textcolor{blue}{f(\y \mid \x, \ti)} d\y }{\int \textcolor{purple}{w_\ti(\y \mid \x)} \textcolor{blue}{f(\y \mid \x, \ti)} d\y},
\end{equation*}
where the residual is given by $\rt(\y; \x) = (\y - \textcolor{blue}{\mu_{\ti}(\x)})$ (see Lemma \ref{lem:double} in the Appendix for proof). This expression is still given in terms of both \textcolor{blue}{identifiable} and \textcolor{purple}{unidentifiable} quantities.

Building off the derivation in Lemma 1 of \citet{pmlr-v89-kallus19a}, we can then express the infimum and supremum in \eqref{eq:mu_bounds} as
\begin{subequations}
    \begin{align*}
        \underline{\mu}_{\ti}(\x; \Gamma) &= \inf_{\y^* \in \mathcal{Y}} \textcolor{blue}{\mu_{\ti}(\x)} + \frac{ \int_{-\infty}^{\y^*} \rt(\y; \x) \textcolor{blue}{f(\y \mid \x, \ti)} d\y }{ \textcolor{blue}{\alpha_{\ti}^{\prime}(\x; \Gamma)} + \textcolor{blue}{\mathrm{P}(\Y \leq \y^* \mid \x, \ti)} }, \\
        \overline{\mu}_{\ti}(\x; \Gamma) &= \sup_{\y^* \in \mathcal{Y}} \textcolor{blue}{\mu_{\ti}(\x)} + \frac{ \int_{\y^*}^{\infty} \rt(\y; \x) \textcolor{blue}{f(\y \mid \x, \ti)} d\y }{ \textcolor{blue}{\alpha_{\ti}^{\prime}(\x; \Gamma)} + \textcolor{blue}{\mathrm{P}(\Y > \y^* \mid \x, \ti)} },
    \end{align*}
\end{subequations}
where $\textcolor{blue}{\alpha_{\ti}^{\prime}(\x; \Gamma) = \frac{\alpha_{\ti}(\x; \Gamma)}{\beta_{\ti}(\x; \Gamma) - \alpha_{\ti}(\x; \Gamma)}}$ and $\mathcal{Y}$ is the space of outcomes as before (see Lemma \ref{lem:density} in the Appendix for proof). 
As such, the bounds on $\E[\Yt \mid \X=\x]$ given $\Gamma$ are now completely defined in terms of \textcolor{blue}{identifiable} quantities; 
namely, the nominal propensity for treatment \textcolor{blue}{$\et(\x)$}, the conditional distribution of the outcome \textcolor{blue}{$p(\Y \mid \x, \ti)$}, and its density function \textcolor{blue}{$f(\y \mid \x, \ti)$}, from which $\textcolor{blue}{\alpha_{\ti}^{\prime}(\x; \Gamma)}$, $\textcolor{blue}{\mu_{\ti}(\x)}$, and $\textcolor{blue}{\mathrm{P}(\cdot \mid \x, \ti)}$ are straightforwardly derived.

Where \citet{pmlr-v89-kallus19a} use a kernel-based estimator, we instead model the identifiable $p(\Y \mid \x, \ti)$ directly. 
Thus, a parametric generative model $p(\Y \mid \x, \ti, \w)$ from which to sample $\y$ and evaluate $\widehat{\mu}_{\ti}(\x; \w)$, and a propensity score estimator $\eh_{\ti}(\x; \w)$ to evaluate $\alpha_{\ti}^{\prime}(\x; \Gamma, \w)$ are needed. 
$\alpha_{\ti}^{\prime}(\x; \Gamma, \w)$ is calculated in the same manner as $\alpha_{\ti}^{\prime}(\x; \Gamma)$, where $\et(\x)$ is replaced by $\eh_{\ti}(\x; \w)$ in the terms for $\alpha_{\ti}(\x; \Gamma)$ and $\beta_{\ti}(\x; \Gamma)$.

Hidden confounders induce multi-modal distributions over $\Y$; therefore, we model $p(\Y \mid \x, \ti, \w)$ with a Gaussian Mixture denisty over $J$ mixture components, noting that with a sufficient number of mixture components it can approximate any continuous distribution \citep{titterington1985statistical}. 
Thus, our density function $f(\y \mid \x, \ti, \w)$ is
\begin{equation*}
    \sum_{j=1}^{J} \widehat{\pi}_{\ti}^{j}(\x; \w) \mathcal{N}\left( \y \mid \muh_{\ti}^{j}(\x; \w),  {\sigh_{\ti}^{j2}}(\x; \w) \right),
\end{equation*}
and $\widehat{\mu}_{\ti}(\x; \w) = \sum_{j=1}^{J} \widehat{\pi}_{\ti}^{j}(\x; \w) \muh_{\ti}^{j}(\x; \w)$  \citep{bishop1994mixture}.
We expand on this choice in Appendix \ref{a:implementation_details}.

Given these models, we can now define the parametric interval CATE estimator, $\widehat{\mathcal{T}}(\x; \Gamma, \w) = [\widehat{\underline{\tau}}(\x; \Gamma, \w), \widehat{\overline{\tau}}(\x; \Gamma, \w)]$:
\begin{subequations}
    \begin{align}
    \widehat{\underline{\tau}}(\x; \Gamma, \w) &= \widehat{\underline{\mu}}_{1}(\x; \Gamma, \w) - \widehat{\overline{\mu}}_{0}(\x; \Gamma, \w),  \\
    \widehat{\overline{\tau}}(\x; \Gamma, \w) &= \widehat{\overline{\mu}}_{1}(\x; \Gamma, \w) - \widehat{\underline{\mu}}_{0}(\x; \Gamma, \w),
    \end{align}
    \label{eq:tau_interval}
\end{subequations}
where
\begin{subequations}
    \begin{align}
        \widehat{\underline{\mu}}_{\ti}(\x; \Gamma, \w) &= \inf_{\y^* \in \mathcal{Y}} \widehat{\underline{\lambda}}_{\ti}(\y^*; \x, \Gamma, \w), \\
        \widehat{\overline{\mu}}_{\ti}(\x; \Gamma, \w) &= \sup_{\y^* \in \mathcal{Y}} \widehat{\overline{\lambda}}_{\ti}(\y^*; \x, \Gamma, \w),
    \end{align}
    \label{eq:mu_def}
\end{subequations}
and for $\rht(\y; \x, \w) = \y-\widehat{\mu}_{\ti}(\x; \w)$:
\begin{subequations}
    \begin{align*}
        \widehat{\underline{\lambda}}_{\ti}(&\y^*; \x, \Gamma, \w) = \\
        &\widehat{\mu}_{\ti}(\x; \w) + \frac{ \int_{-\infty}^{\y^*} \rht(\y; \x, \w) f(\y \mid \x, \ti, \w) d\y }{ \alpha_{\ti}^{\prime}(\x; \Gamma, \w) + \int_{-\infty}^{\y^*} f(\y \mid \x, \ti, \w) d\y }, \\
        \widehat{\overline{\lambda}}_{\ti}(&\y^*; \x, \Gamma, \w) = \\
        &\widehat{\mu}_{\ti}(\x; \w) + \frac{ \int_{\y^*}^{\infty} \rht(\y; \x, \w) f(\y \mid \x, \ti, \w) d\y }{ \alpha_{\ti}^{\prime}(\x; \Gamma, \w) + \int_{\y^*}^{\infty} f(\y \mid \x, \ti, \w) d\y }.
    \end{align*}
    \label{eq:lambda}
\end{subequations}

\subsection{Computing the Interval Estimator}
\label{sec:computing_the_interval}
Where \citet{pmlr-v89-kallus19a} define their interval estimator as an optimization problem over $n$ weight variables, where $n$ is the size of the training set, we instead characterize ours as an optimization problem over $m$ samples of $\y$ from the modeled conditional distribution $p(\Y \mid \x, \ti, \w)$.
Because $\widehat{\underline{\lambda}}_{\ti}(\y^*; \x, \Gamma, \w)$ is convex and $\widehat{\overline{\lambda}}_{\ti}(\y^*; \x, \Gamma, \w)$ is concave with increasing $y^*$, we can employ a similar discrete line search as \citet{pmlr-v89-kallus19a} to solve this optimization problem. However, where their search has $\mathcal{O}(n)$ time complexity, ours is independent of the dataset size and has $\mathcal{O}(m)$ time complexity.
$m$ is a user-defined parameter that controls the stability of predicted $\widehat{\underline{\lambda}}_{\ti}(k; \x, \Gamma, \w)$ and $\widehat{\overline{\lambda}}_{\ti}(k; \x, \Gamma, \w)$, defined below. 

This solution uses Monte-Carlo integration to estimate $\widehat{\underline{\lambda}}_{\ti}(\y^*; \x, \Gamma, \w)$ and $\widehat{\overline{\lambda}}_{\ti}(\y^*; \x, \Gamma, \w)$, so as $m$ increases the Monte-Carlo estimates converge to the integral.
One could use other methods to evaluate the integrals, such as Bayesian Quadrature.

The algorithm proceeds by reordering the samples of $\y$ such that $\y_1 \leq \y_2 \leq \dots \y_m$ and defining the following terms for $k \in \{ 1, \dots, m \}$, $\x \in \mathcal{X}$, and $\Gamma \geq 1$:
\begin{subequations}
    \begin{align*}
        \widehat{\underline{\lambda}}_{\ti}(k; \x, \Gamma, \w) &= \widehat{\mu}_{\ti}(\x; \w) + \frac{ \frac{1}{m} \sum_{i=1}^{k} \rht(\y; \x, \w) }{\alpha_{\ti}^{\prime}(\x; \Gamma, \w) + \frac{k}{m} }, \\
        \widehat{\overline{\lambda}}_{\ti}(k; \x, \Gamma, \w) &= \widehat{\mu}_{\ti}(\x; \w) + \frac{ \frac{1}{m} \sum_{i=k + 1}^{m} \rht(\y; \x, \w) }{\alpha_{\ti}^{\prime}(\x; \Gamma, \w) + 1 - \frac{k}{m} }.
    \end{align*}
    \label{eq:lambda_opt}
\end{subequations}
Then, $\widehat{\underline{\mu}}_{\ti}(\x; \Gamma, \w) = \widehat{\underline{\lambda}}_{\ti}(k^{L}; \x, \Gamma, \w)$, and $\widehat{\overline{\mu}}_{\ti}(\x; \Gamma, \w) = \widehat{\overline{\lambda}}_{\ti}(k^{H}; \x, \Gamma, \w)$,
with
\begin{equation*}
    \begin{split}
        k^{L} &= \inf\left\{ \widehat{\underline{\lambda}}_{\ti}(k; \x, \Gamma, \w) \leq \widehat{\underline{\lambda}}_{\ti}(k + 1; \x, \Gamma, \w) \right\}_{k=1}^{m} \\
        k^{H} &= \inf\left\{\widehat{\overline{\lambda}}_{\ti}(k; \x, \Gamma, \w) \geq \widehat{\overline{\lambda}}_{\ti}(k + 1; \x, \Gamma, \w) \right\}_{k=1}^{m}.
    \end{split}
\end{equation*}

\subsection{Tightness of Bounds}
\begin{theorem}
Suppose that
    \renewcommand{\labelenumi}{\roman{enumi}}
    \begin{enumerate}
        \item $n \to \infty$, and $\x \in \D$.
        \item $\Y$ is a bounded random variable.
        \item $f(\y \mid \x, \ti, \w)$ converges in measure to $f(\y \mid \x, \ti)$.
        \item $\eh_{\ti}(\x; \w)$ and $\widehat{\mu}_{\ti}(\x; \w)$ are consistent estimators of $\E[\T=\ti \mid \X=\x]$ and $\E[\Y \mid \X=\x, \T=\ti]$.
        \item $\et(\x, \y)$ is bounded away from 0 and 1 uniformly over $\x \in \mathcal{X}$, $\y \in \mathcal{Y}$, and $\ti \in \{0, 1\}$   (overlap assumption).
    \end{enumerate}
Then, for $\ti \in \{0, 1\}$, $\widehat{\underline{\mu}}_{\ti}(\x; \Gamma, \w) \xrightarrow[]{p} \underline{\mu}_{\ti}(\x; \Gamma) $, $\widehat{\overline{\mu}}_{\ti}(\x; \Gamma, \w) \xrightarrow[]{p} \overline{\mu}_{\ti}(\x; \Gamma) $, which imply that both $\widehat{\underline{\tau}}(\x; \Gamma, \w) \xrightarrow[]{p} \underline{\tau}(\x; \Gamma)$, and $\widehat{\overline{\tau}}(\x; \Gamma, \w) \xrightarrow[]{p} \overline{\tau}(\x; \Gamma)$.
\label{th:convergence}
\end{theorem}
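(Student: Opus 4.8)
The plan is to reduce the statement to the convergence in probability of each identifiable ingredient entering $\widehat{\overline{\mu}}_{\w}^{\Gamma}$ and $\widehat{\underline{\mu}}_{\w}^{\Gamma}$ — namely $\muhw(\x,\ti)$, $\alpha_{\w}^{\Gamma\prime}(\x,\ti)$, the truncated residual integral $\int_{\y^*}^{\infty}\rhw(\y,\ti)f_{\w}(\y\mid\x,\ti)\,d\y$, and the tail mass $\int_{\y^*}^{\infty}f_{\w}(\y\mid\x,\ti)\,d\y$ — and then to argue that the map from these ingredients to the bounds (including the inner $\sup_{\y^*}$ and $\inf_{\y^*}$, and the final differences defining $\widehat{\overline{\tau}}_{\w}$, $\widehat{\underline{\tau}}_{\w}$) is continuous, so that the conclusion follows by the continuous mapping theorem and Slutsky. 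A first observation is that assumption (v) applied with the complete propensity forces the nominal propensity $\et(\x)$ to be bounded away from $0$ and $1$ as well (being a mixture of complete propensities); hence $\alpha_{\ti}(\x;\Gamma),\beta_{\ti}(\x;\Gamma)$ are bounded and, for $\Gamma>1$, $\alpha_{\ti}^{\Gamma\prime}=\alpha_{\ti}/(\beta_{\ti}-\alpha_{\ti})$ is bounded and bounded away from $0$; combined with the consistency of $\eh_{\ti,\w}(\x)$ from (iv), $\alpha_{\w}^{\Gamma\prime}(\x,\ti)\xrightarrow{p}\alpha_{\ti}^{\Gamma\prime}$, and in particular the denominators in $\widehat{\overline{\lambda}}_{\w}^{\Gamma}$ and $\widehat{\underline{\lambda}}_{\w}^{\Gamma}$ are eventually bounded below by a positive constant with high probability. (The edge case $\Gamma=1$ is immediate: the ignorance set $\mathcal{W}_{\ti}$ shrinks to the singleton $\{1/\et(\x)\}$, so both bounds equal the plug-in CATE, which is consistent by (iv).)

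Next I would upgrade the convergence-in-measure hypothesis (iii) to $L^1$ convergence: since $\mathcal{Y}$ is bounded by (ii), Lebesgue measure on $\mathcal{Y}$ is finite, and since $f_{\w}(\cdot\mid\x,\ti)$ and $f(\cdot\mid\x,\ti)$ are probability densities, every subsequence has a further subsequence converging a.e., whence Scheffé's lemma gives $\int_{\mathcal{Y}}\abs{f_{\w}-f}\,d\y\to 0$ along it; as the limit is common, $\lVert f_{\w}-f\rVert_{1}\xrightarrow{p}0$. Using the boundedness of $\Y$ and the consistency of $\muhw(\x)$, the residual $\rhw(\y,\ti)=\y-\muhw(\x,\ti)$ is bounded by a fixed constant $C$ with high probability, so for every $\y^*$
\begin{equation*}
\Bigl\lvert\int_{\y^*}^{\infty}\rhw(\y,\ti)f_{\w}(\y\mid\x,\ti)\,d\y-\int_{\y^*}^{\infty}\rt(\y)f(\y\mid\x,\ti)\,d\y\Bigr\rvert\le C\lVert f_{\w}-f\rVert_{1}+\abs{\muhw(\x,\ti)-\mu(\x,\ti)}\xrightarrow{p}0,
\end{equation*}
and likewise $\lvert\int_{\y^*}^{\infty}f_{\w}\,d\y-\mathrm{P}(\Y>\y^*\mid\x,\ti)\rvert\le\lVert f_{\w}-f\rVert_{1}\xrightarrow{p}0$; crucially, both bounds are \emph{uniform in} $\y^*\in\mathcal{Y}$, since each integral over $[\y^*,\infty)$ is dominated by the corresponding integral over all of $\mathcal{Y}$.

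Now write $\widehat{\overline{\lambda}}_{\w}^{\Gamma}(\y^*;\x,\ti)=\muhw(\x,\ti)+N_{\w}(\y^*)/D_{\w}(\y^*)$ and let $N(\y^*),D(\y^*)$ denote the population counterparts (numerator and $\alpha_{\ti}^{\Gamma\prime}$-shifted denominator). The elementary inequality $\lvert N_{\w}/D_{\w}-N/D\rvert\le(\lvert N_{\w}-N\rvert D+\lvert N\rvert\lvert D-D_{\w}\rvert)/(D_{\w}D)$, the uniform estimates above, boundedness of $\lvert N\rvert$ and $D$ (from $\Y$ bounded), and the denominators being eventually bounded away from $0$, together give $\sup_{\y^*\in\mathcal{Y}}\lvert\widehat{\overline{\lambda}}_{\w}^{\Gamma}(\y^*;\x,\ti)-\overline{\lambda}^{\Gamma}(\y^*;\x,\ti)\rvert\xrightarrow{p}0$, and symmetrically for $\widehat{\underline{\lambda}}_{\w}^{\Gamma}$. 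Since $\lvert\sup g_{\w}-\sup g\rvert\le\sup\lvert g_{\w}-g\rvert$ (and the analogue for $\inf$), this yields $\widehat{\overline{\mu}}_{\w}^{\Gamma}(\x,\ti)\xrightarrow{p}\overline{\mu}^{\Gamma}(\x,\ti)$ and $\widehat{\underline{\mu}}_{\w}^{\Gamma}(\x,\ti)\xrightarrow{p}\underline{\mu}^{\Gamma}(\x,\ti)$ for $\ti\in\{0,1\}$; and because $\widehat{\overline{\tau}}_{\w}(\x;\Gamma)$ and $\widehat{\underline{\tau}}_{\w}(\x;\Gamma)$ are fixed differences of these four quantities, the convergence of the CATE bounds follows.

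The main obstacle is the third step: controlling the inner optimization over $\y^*$. Everything hinges on making the $\sup_{\y^*}$ of the $\widehat{\lambda}$-discrepancies go to zero uniformly, which requires both (a) that $L^1$-closeness of $f_{\w}$ to $f$ transfers to the truncated residual integrals \emph{uniformly over the truncation point} (handled above by domination), and (b) that the denominators $\alpha_{\w}^{\Gamma\prime}(\x,\ti)+\int_{\y^*}^{\infty}f_{\w}$ stay away from $0$ — which is precisely where the overlap assumption (v) and the restriction $\x\in\D$ (so that the consistency assumption (iv) is in force) are essential; without them the ratios can be discontinuous and the sketch collapses. The concavity of $\widehat{\overline{\lambda}}_{\w}^{\Gamma}$ and convexity of $\widehat{\underline{\lambda}}_{\w}^{\Gamma}$ noted in Section \ref{sec:computing_the_interval} are not needed for the convergence argument per se, but they guarantee the suprema and infima are attained, which keeps the continuous-mapping step clean.
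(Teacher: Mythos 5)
Your proposal is correct, and it shares the paper's overall skeleton: both arguments reduce the claim to the convergence of the same four ingredients --- $\muhw(\x,\ti)$, $\alpha_{\w}^{\Gamma\prime}(\x,\ti)$, the truncated residual integral, and the truncated mass $\int_{-\infty}^{\y^*} f_{\w}$ --- followed by stability of the inner $\sup/\inf$ and of the differences defining the CATE bounds. Where you genuinely diverge is in how the truncated-integral terms are controlled. The paper follows a Kallus-et-al.-style chain of term-by-term inequalities, arriving at a bound of the form $\Delta_1 + \Delta_2\Delta_3/(\alpha_{\w}^{\Gamma\prime}\alpha_{\ti}^{\Gamma\prime}) + \Delta_3\Delta_4/(\alpha_{\w}^{\Gamma\prime}\alpha_{\ti}^{\Gamma\prime}) + \Delta_3/\alpha_{\ti}^{\Gamma\prime}$, and then sends $\Delta_3,\Delta_4 \to 0$ by combining convergence in distribution with Lebesgue dominated convergence, moving the limit inside the supremum over $\y^*$ and inside the probability rather informally. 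You instead upgrade the convergence-in-measure hypothesis to $L^1$ convergence via Scheff\'e's lemma (exploiting the boundedness of $\mathcal{Y}$), which yields uniformity over the truncation point $\y^*$ for free by domination, and then a single elementary ratio inequality plus $\lvert \sup g_{\w} - \sup g \rvert \leq \sup \lvert g_{\w} - g\rvert$ finishes the argument. Your route buys a cleaner and more rigorous treatment of precisely the step the paper glosses over (uniformity in $\y^*$), and you make explicit two facts the paper uses only implicitly: that overlap forces $\alpha_{\ti}^{\Gamma\prime}$, and hence the denominators, to be bounded away from zero, and that the $\Gamma = 1$ case degenerates to ordinary consistency of the plug-in estimator. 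The paper's route, in exchange, stays closer to the machinery of Kallus et al.\ (their Lemma 3) and does not require invoking Scheff\'e; the convexity/concavity facts you mention are indeed not needed for convergence in either treatment.
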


Proof for Theorem \ref{th:convergence} is given in Appendix \ref{sec:estimator_proof}.

\subsection{Model Uncertainty for $\widehat{\tau}(\x)$ Interval Estimates} \label{sec:uncertainty}

In order to account for uncertainty that arises from both a lack of similarity and a lack of overlap, it is necessary to propagate the uncertainty in $\w$ to the estimates of the lower and upper bounds on $\widehat{\tau}(\x)$. 
The CATE bounds above are for a given parameterization $\w \sim q(\W \mid \D)$. By taking the expectation over $\w$, we arrive at

\begin{equation}
    \begin{split}
        \widehat{\underline{\tau}}(\x; \Gamma) &= \E_{\w} [ \widehat{\underline{\tau}}(\x; \Gamma, \w) ] - 2 \cdot \sqrt{\var_{\w} \left[ \widehat{\underline{\tau}}(\x; \Gamma, \w) \right]} \\
        \widehat{\overline{\tau}}(\x; \Gamma) &= \E_{\w} [ \widehat{\overline{\tau}}(\x; \Gamma, \w) ] + 2 \cdot \sqrt{\var_{\w} \left[ \widehat{\overline{\tau}}(\x; \Gamma, \w) \right]}
    \end{split}
    \label{eq:tau_hats}
\end{equation}

\begin{equation*}
    \widehat{\mathcal{T}}(\x; \Gamma) =\left[\widehat{\underline{\tau}}(\x; \Gamma) , \widehat{\overline{\tau}}(\x; \Gamma) \right],
    \label{eq:tau_expected}
\end{equation*}

which we name the predictive interval (using 2 standard deviations as the Bayesian confidence level here).
The expectations and variances in eq. \eqref{eq:tau_hats} can then be evaluated via Monte Carlo integration.

\begin{table*}[ht]
    \centering
    \caption{\textbf{Simulated Data:} Policy risk errors for various policies under data generating processes with different $\Gamma^*$. Average test-set policy risk errors and 95\% confidence intervals over 50 randomly generated datasets are reported. Statistically significant improvements for ``well-specified'' $\Gamma = \Gamma^*$, as determined by a paired t-test (1\% threshold), shown in \textcolor{green}{green}. Policy risk errors are multiplied by 100 for readability.}
    \begin{tabular}{c|ccc|ccc}
        \toprule
        $n=1000$ & \multicolumn{3}{c}{Proposed Method} & \multicolumn{3}{c}{\citet{pmlr-v89-kallus19a}} \\
         $\log{\Gamma^*}$ & $\hat{\pi}(\x; \exp(0.5))$ & $\hat{\pi}(\x; \exp(1.0))$ & $\hat{\pi}(\x; \exp(1.5))$ & $\hat{\pi}(\x; \exp(0.5))$ & $\hat{\pi}(\x; \exp(1.0))$ & $\hat{\pi}(\x; \exp(1.5))$ \\
         \midrule
         0.5 & $\mathbf{0.07\pm0.03}$ & $0.28\pm0.03$ & $0.38\pm0.04$ & $\mathbf{0.10\pm0.04}$ & $0.44\pm0.09$ & $0.99\pm0.38$ \\
         1.0 & $0.71\pm0.20$ & \textcolor{green}{$\mathbf{0.10\pm0.04}$} & $0.31\pm0.03$ & $0.48\pm0.19$ & $\mathbf{0.25\pm0.11}$ & $0.81\pm0.39$ \\
         1.5 & $3.99\pm0.59$ & $0.75\pm0.18$ & \textcolor{green}{$\mathbf{0.13\pm0.04}$} & $3.33\pm0.61$ & $0.52\pm0.19$ & $\mathbf{0.52\pm0.40}$ \\ 
         \bottomrule
    \end{tabular}
    \label{tab:synthetic}
    \vspace{-4mm}
\end{table*}

\section{Experiments} \label{sec:experiments}

In this section we evaluate our methods using synthetic and semi-synthetic datasets. 
To assess our method on high-dimensional data, we introduce a new benchmark dataset, HC-MNIST. 
To illustrate how our uncertainty aware bounds can be used for deferring treatment, we introduce a hidden confounding variant of the IHDP dataset \citep{hill2011bayesian}. 
Details about the data generating processes including dataset links, code links, and validation splitting procedures are given in Appendix \ref{a:datasets}.

The sampling procedure outlined in subsections \ref{sec:computing_the_interval}-\ref{sec:uncertainty} for the estimator in eq. \eqref{eq:tau_hats} requires models for $p(\Y \mid \x, \ti)$ and the nominal propensity $\et(\x)$.
We use a mixture density network for $p(\Y \mid \x, \ti, \w)$ and a standard neural network with categorical likelihood for $\eh_{\ti}(\x; \w)$.
Deep Ensembles \citep{balaji2017ensemble} are used to approximate sampling $\w \sim p(\W \mid \D)$. 
In general, modelling $p(\W \mid \D)$ is a choice to be made by the practitioner, for example, by using Bayesian Neural Networks or simpler Bayesian models for $p(\Y \mid \x, \ti, \w)$.
Details for each experiment, including architectures, hyper-parameter tuning, training procedures, and compute infrastructure are detailed in Appendix \ref{a:implementation_details}.

\subsection{Simulated Data} 
We first consider the one-dimensional example introduced by \citet{pmlr-v89-kallus19a} \ref{a:datasets_simulated}. 
Figure \ref{fig:msm}, generated with $n=10000$ and $\log{\Gamma^*} = 1$, illustrates the nonlinear CATE function of these data. 
This is a useful example because both the CATE and the bias induced by hidden confounding are heterogeneous in $\x$. 
Further, Figure \ref{fig:msm} shows that our estimator, outlined in sections \ref{sec:interval_estimator} and \ref{sec:computing_the_interval}, converges to tight bounds on the CATE interval for varying choices of $\Gamma$, achieving coverage when the assumed $\Gamma$ matches the true value $\Gamma^*$ used to generate the data. 
For this experiment and the next we assume that the outcomes correspond to costs, so that we aim to treat when $\tau(\x) \leq 0$.

For a quantitative evaluation, we use the same minimax-optimal policy as \citet{pmlr-v89-kallus19a}, namely, $\pi^*(\x; \Gamma) = \mathbb{I}(\overline{\tau}(\x; \Gamma) \leq 0) + \pi_0(\x) \mathbb{I}( \underline{\tau}(\x; \Gamma) < 0 < \overline{\tau}(\x; \Gamma) ) $. 
This says that the optimal policy always treats when $\overline{\tau}(\x; \Gamma) \leq 0$ and otherwise reverts to the default policy $\pi_0(\x)$. Setting $\pi_0(\x) = 0$, \emph{do not treat}, our approximation to the optimal policy is given by $\widehat{\overline{\pi}}(\x; \Gamma) = \mathbb{I}(\widehat{\overline{\tau}}(\x; \Gamma) \leq 0)$. 
The risk associated with a given policy is defined as $V(\pi; \tau) = \E[\pi(\x)\Yone + (1 - \pi(\x))\Yzero]$. 
Intuitively, policy risk will be minimized when $\widehat{\overline{\tau}}(\x)$ is aligned exactly with the true CATE $\tau(\x)$, and any deviations between $\widehat{\overline{\tau}}(\x)$ and $\tau(\x)$ will result in a higher policy risk score.
To compare different methods on a finite sample, we report the \emph{Policy Risk Error} as the mean squared error between the risk of the optimal treatment policy $\mathbb{I}(\tau(x) < 0)$, and the policy risk of a given policy $\pi$. 

In Table \ref{tab:synthetic}, we compare the Policy Risk Error of our method to the one proposed by \citet{pmlr-v89-kallus19a}. 
The average and $95\%$ confidence intervals over 50 random realizations of training ($n=1000$), validation ($n=100$), and test ($n=1000$) datasets are reported. 
On the diagonals we assess each policy and method with a ``well-specified'' $\Gamma = \Gamma^*$. 
These results show empirical evidence for the tightness of our interval estimator's bounds, and improved accuracy w.r.t. \citet{pmlr-v89-kallus19a} on this low-dimension problem.

\subsection{HC-MNIST: Hidden Confounding with High-dimensional Data}

For this experiment, we adopt the one-dimensional simulated setting into a high-dimensional setting \ref{a:datasets_hcmnist}. Specifically, we assign to each image of the MNIST dataset \citep{lecun1998mnist} a latent feature $\phi \in [-2,2]$ as follows: all images of the digits $0$ are assigned a $\phi \in [-2,-1.6]$, all images $1$ have $\phi \in [-1.6,-1.2]$, and so on up to the digit $9$. The images of every digit are sorted by brightness and ordered equally within the interval of $\phi$ values assigned to images of that digit. Finally, these one-dimensional hidden values $\phi$ are used as the inputs to the same model of hidden confounding introduced by \citet{pmlr-v89-kallus19a} and used in the simulated data experiments above. We report the results of our method in Table \ref{tab:HC-MNIST}, showing it achieves near optimal policy risk under the true level of hidden confounding. We do not to report results for \citet{pmlr-v89-kallus19a} here as their kernel based method did not scale well to the full dataset size of MNIST, and it did not give sensible results when training only on a subset of the dataset.

\begin{table}[t]
    \centering
    \caption{\textbf{HC-MNIST:} Policy risk for various policies under data generating processes with different $\Gamma^*$. The proposed method approaches the ideal policy value of -1.41 under optimal policy given the true CATE. Average test-set policy risk errors and 95\% confidence intervals over 20 randomly generated datasets are reported. This shows that our method scales well to large-sample, high-dimensional datasets.}
    \begin{tabular}{c|ccc}
        \toprule
        & \multicolumn{3}{c}{Proposed Method} \\
         $\log{\Gamma^*}$ & $\hat{\pi}(\x; \exp(0.5))$ & $\hat{\pi}(\x; \exp(1.0))$ & $\hat{\pi}(\x; \exp(1.5))$ \\
         \midrule
         0.5 & $\mathbf{\text{-}1.40\pm0.01}$ & $\text{-}1.36\pm0.01$ & $\text{-}1.35\pm0.01$ \\
         1.0 & $\text{-}1.32\pm0.02$ & $\mathbf{\text{-}1.40\pm0.01}$ & $\text{-}1.36\pm0.01$ \\
         1.5 & $\text{-}1.98\pm0.02$ & $\text{-}1.30\pm0.02$ & $\mathbf{\text{-}1.38\pm0.01}$ \\ 
         \bottomrule
    \end{tabular}
    \label{tab:HC-MNIST}
    \vspace{-4mm}
\end{table}

\subsection{IHDP Hidden Confounding}

\begin{figure}[ht]
    \centering
    \begin{subfigure}{0.46\textwidth}
        \centering
        \includegraphics[width=\linewidth]{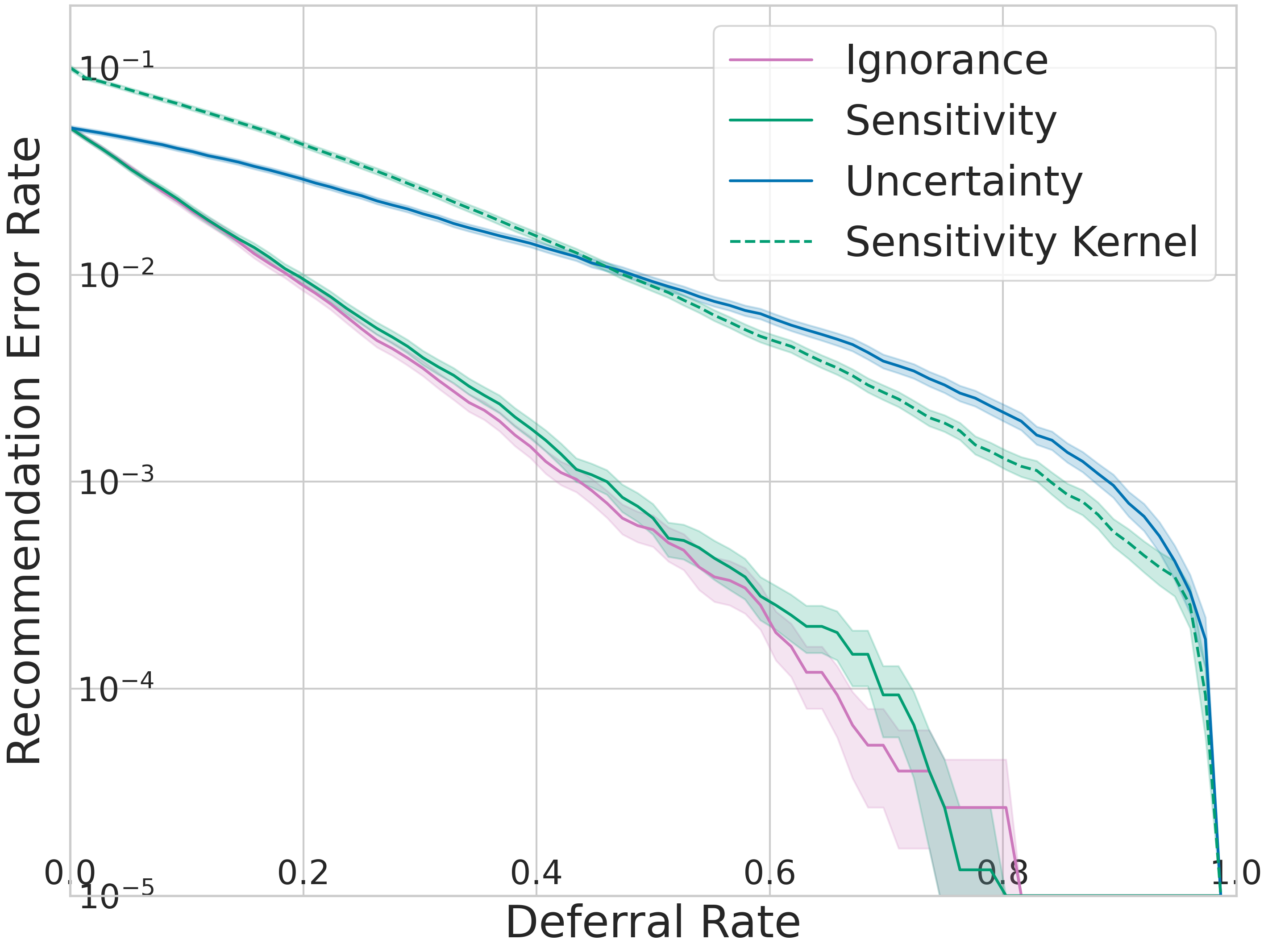}
        \label{fig:context_sim}
        \vspace{-4mm}
    \end{subfigure}
    \caption{\textbf{IHDP Hidden Confounding:} Error rate as we sweep over the percentage of deferred points.
    We propose that recommendations should be deferred when there is ignorance.
    On the x-axis we vary the share of recommendations deferred, simulating various levels of practitioner caution.
    \textit{Ignorance} (ours) accounts for all lack of knowledge. 
    \textit{Uncertainty} \citep{jesson2020identifying} accounts only for insufficient similarity and overlap. 
    \textit{Sensitivity} only accounts for hidden confounding, without accounting for insufficient similarity and overlap; implemented by omitting the variance term in Eq. \eqref{eq:tau_hats}. \textit{Sensitivity Kernel} is the kernel method of \citet{pmlr-v89-kallus19a}, which does not account for other sources of ignorance. 
    Results show that all sources of ignorance are important on IHDP with one hidden confounder.}
    \label{fig:ihdp}
    \vspace{-1em}
\end{figure}

In this section we demonstrate how our uncertainty-aware interval estimator can be used to inform deferral policies for treatment recommendations. 
To this end we use the IHDP dataset \citep{hill2011bayesian} as \citet{jesson2020identifying} show that low overlap and/or similarity are problems for IHDP.
For insufficient context, we induce hidden confounding by hiding covariate $\mathrm{x}_{9}$ during model training and CATE estimation; however, it is still used for the generation of synthetic observed outcomes as per the response surface B described by \citet{hill2011bayesian} \ref{a:datasets_hcihdp}. 

In contrast to the above experiments, treatment $T=1$ is recommended if and only if $\tau(\x) > 0$; we propose a deferral policy that simulates deferral to an expert and withholds a recommendation if the predicted CATE interval intersects 0.
We select $\Gamma_s$ such that the uncertainty aware CATE interval $[\widehat{\underline{\tau}}(\x; \Gamma_s), \widehat{\overline{\tau}}(\x; \Gamma_s)]$ crosses 0.
We then defer predictions with the lowest $\Gamma_s$ value; these are predictions the model is least sure about.
We compare using the same policy for the \citet{pmlr-v89-kallus19a} method, and to the epistemic uncertainty based method proposed by \citet{jesson2020identifying}. 
We report the error rate between recommendations given by $\mathbb{I}(\tau(\x) > 0)$ and $\mathbb{I}(\widehat{\underline{\tau}}(\x) > 0)$ on the remaining recommendations that were not deferred. 

In Figure \ref{fig:ihdp}, we see that the epistemic \emph{uncertainty} policy (blue solid line) has a moderate decrease in error rate as the rate of deferral increases. 
The green solid \emph{sensitivity} line shows that the error rate decreases as we defer recommendations based only on levels of hidden confounding. 
We should see the same behavior for the sensitivity method (green dashed line) proposed by \citet{pmlr-v89-kallus19a}, but it appears to struggle for higher dimensional covariates.
The purple solid \emph{ignorance} line shows that using the uncertainty aware CATE interval further improves results, showing that our method can account for all sources of ignorance discussed.

\section{Conclusion}
In this paper we aim to create a framework for jointly expressing the multiple sources of uncertainty, or ignorance, in individual-level causal inference. 
This includes uncertainty due to finite samples and due to possible violations of the standard causal inference assumptions of overlap and no-hidden confounding, as well as uncertainty due to out-of-distribution data. 
The novel interval estimator we present can scale to large samples and high-dimensional data, and performs well on semi-synthetic, high-dimensional datasets. 
We hope this work leads to further interest in research encompassing the varied possible sources of uncertainty in statistical machine learning models.

\section{Acknowledgements}
We would like to thank Joost van Amersfoort, Jan Brauner, OATML group members, and all anonymous reviewers for sharing their valuable feedback and insights.
Further, we would like to thank Angela Zhou and Tim G. J. Rudner for pointing out inaccuracies that they found in the background and the proof for theorem \ref{th:convergence} after the publication of this work. These have now been corrected.
A.J. would like to thank Lisa, Milad, Joost, Luisa, Lewis, Tim, and Andreas for their friendship and support over a very challenging year. 
U.S. was partially supported by the Israel Science Foundation (grant No. 1950/19).

\bibliography{references}
\bibliographystyle{icml2021}


\newpage
\onecolumn
\appendix

\section{CATE Interval}

\begin{lemma}
    The unbiased estimate of the expected potential outcome under hidden confounding, given in Equation \eqref{eq:unbiased_mu} has the following equivalent characterization:
    \begin{equation}
        \E\left[\Yt \mid \X=\x \right] = \mu_{\ti}(\x) + \frac{ \int (\y - \mu_{\ti}(\x)) w_\ti(\y \mid \x) f(\y \mid \x, \ti) d\y }{\int w_\ti(\y \mid \x) f(\y \mid \x, \ti) d\y}.
    \end{equation}
    \label{lem:double}
\end{lemma}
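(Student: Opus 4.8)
The plan is to start from the definition in Equation~\eqref{eq:unbiased_mu} and algebraically split the integrand in the numerator by writing $\y = (\y - \mu(\x,\ti)) + \mu(\x,\ti)$. Concretely, set $g_\ti(\y) = \textcolor{red}{w_\ti(\y \mid \x)} \textcolor{blue}{\et(\x) f(\y \mid \x, \ti)}$ for brevity (the unnormalized weighting that appears in both numerator and denominator of Equation~\eqref{eq:unbiased_mu}). Then
\begin{equation*}
    \E[\Yt \mid \X = \x] = \frac{\int \y\, g_\ti(\y)\, d\y}{\int g_\ti(\y)\, d\y} = \frac{\int \bigl((\y - \mu(\x,\ti)) + \mu(\x,\ti)\bigr) g_\ti(\y)\, d\y}{\int g_\ti(\y)\, d\y}.
\end{equation*}

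Next I would distribute the integral over the sum and pull the constant $\mu(\x,\ti)$ out of its integral, giving two terms: $\mu(\x,\ti) \cdot \frac{\int g_\ti(\y)\,d\y}{\int g_\ti(\y)\,d\y} = \mu(\x,\ti)$ from the second piece, and $\frac{\int (\y - \mu(\x,\ti)) g_\ti(\y)\, d\y}{\int g_\ti(\y)\, d\y}$ from the first. Substituting back the definition of $g_\ti$ and cancelling the common factor $\et(\x)$ (a positive constant in $\y$, well-defined by the overlap assumption) from numerator and denominator of that second fraction yields exactly
\begin{equation*}
    \E[\Yt \mid \X = \x] = \mu(\x,\ti) + \frac{\int (\y - \mu(\x,\ti))\, w_\ti(\y \mid \x)\, f(\y \mid \x, \ti)\, d\y}{\int w_\ti(\y \mid \x)\, f(\y \mid \x, \ti)\, d\y},
\end{equation*}
which is the claimed identity, with the residual $\rt(\y) = \y - \mu(\x,\ti)$ appearing in the numerator.

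There is essentially no hard part here: the result is a one-line re-centering of a ratio of integrals, valid as long as the denominator $\int w_\ti(\y \mid \x) f(\y \mid \x, \ti)\, d\y$ is finite and nonzero, which holds because $w_\ti$ lies in the bounded ignorance set $[\alpha_\ti(\x,\Gamma), \beta_\ti(\x,\Gamma)]$ with $\alpha_\ti > 0$ under overlap, and $f(\cdot \mid \x, \ti)$ integrates to one. The only point warranting a line of care is confirming that $\et(\x)$ cancels cleanly — i.e. that the factorization in Equation~\eqref{eq:unbiased_mu} indeed carries $\et(\x)$ identically through both numerator and denominator so that removing it does not change the ratio — and that $\mu(\x,\ti)$ is treated as a constant with respect to the integration variable $\y$, which it is by definition $\mu(\x,\ti) = \E[\Y \mid \T=\ti, \X=\x]$.
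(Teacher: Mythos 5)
Your proposal is correct and follows essentially the same route as the paper's proof: both amount to re-centering the ratio of integrals around $\mu(\x,\ti)$ (the paper adds and subtracts $\mu(\x,\ti)$ and uses that the normalized weight integrates to one, which is algebraically identical to your split $\y = (\y - \mu(\x,\ti)) + \mu(\x,\ti)$), followed by cancelling the common factor $\et(\x)$. Your added remarks on why the denominator is nonzero and why $\et(\x)$ cancels are sound and consistent with the paper.
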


\begin{proof}
    \begin{subequations}
        \begin{align}
            \E\left[\Yt \mid \X=\x \right] &= \mu_{\ti}(w_\ti; \x) \\
            &= \frac{\int \y w_\ti(\y \mid \x) f_{\ti}(\y \mid \x) d\y}{\int w_\ti(\y \mid \x) f_{\ti}(\y \mid \x) d\y} \\
            &= \int \y\frac{w_\ti(\y \mid \x) f_{\ti}(\y \mid \x)}{\int w_\ti(\y^{\prime} \mid \x) f_{\ti}(\y^{\prime} \mid \x) d\y^{\prime}} d\y \\
            &= \mu_{\ti}(\x) + \int \y\frac{w_\ti(\y \mid \x) f_{\ti}(\y \mid \x)}{\int w_\ti(\y^{\prime} \mid \x) f_{\ti}(\y^{\prime} \mid \x) d\y^{\prime}} d\y - \mu_{\ti}(\x) \\
            &= \mu_{\ti}(\x) + \int \y\frac{w_\ti(\y \mid \x) f_{\ti}(\y \mid \x)}{\int w_\ti(\y^{\prime} \mid \x) f_{\ti}(\y^{\prime} \mid \x) d\y^{\prime}} d\y - \mu_{\ti}(\x) \int \frac{w_\ti(\y \mid \x) f_{\ti}(\y \mid \x)}{\int w_\ti(\y^{\prime} \mid \x) f_{\ti}(\y^{\prime} \mid \x) d\y^{\prime}} d\y \\
            &= \mu_{\ti}(\x) + \int \y\frac{w_\ti(\y \mid \x) f_{\ti}(\y \mid \x)}{\int w_\ti(\y^{\prime} \mid \x) f_{\ti}(\y^{\prime} \mid \x) d\y^{\prime}} d\y - \int \mu_{\ti}(\x) \frac{w_\ti(\y \mid \x) f_{\ti}(\y \mid \x)}{\int w_\ti(\y^{\prime} \mid \x) f_{\ti}(\y^{\prime} \mid \x) d\y^{\prime}} d\y \\
            &= \mu_{\ti}(\x) + \int (\y - \mu_{\ti}(\x)) \frac{w_\ti(\y \mid \x) f_{\ti}(\y \mid \x)}{\int w_\ti(\y^{\prime} \mid \x) f_{\ti}(\y^{\prime} \mid \x) d\y^{\prime}} d\y \\
            &= \mu_{\ti}(\x) + \frac{ \int (\y - \mu_{\ti}(\x)) w_\ti(\y \mid \x) \et(\x) f(\y \mid \x, \ti) d\y }{\int w_\ti(\y \mid \x) \et(\x) f(\y \mid \x, \ti) d\y} \\
            &= \mu_{\ti}(\x) + \frac{ \int (\y - \mu_{\ti}(\x)) w_\ti(\y \mid \x) f(\y \mid \x, \ti) d\y }{\int w_\ti(\y \mid \x) f(\y \mid \x, \ti) d\y}.
        \end{align}
        \label{eq:from_kallus_to_us}
    \end{subequations}
\end{proof}

\begin{lemma}
    The bounds for the conditional expected potential outcomes $\underline{\mu}_{\ti}(\x; \Gamma)$ and $\overline{\mu}_{\ti}(\x; \Gamma)$ defined in equations \eqref{eq:mu_bounds} have the following equivalent characterization:
    \begin{subequations}
        \begin{align*}
            \underline{\mu}_{\ti}(\x; \Gamma) &= \inf_{\y^* \in \mathcal{Y}} \mu_{\ti}(\x) + \frac{ \int_{-\infty}^{\y^*} \rt(\y; \x) f(\y \mid \x, \ti) d\y }{ \alpha_{\ti}^{\prime}(\x; \Gamma) + \mathrm{P}(\Y \leq \y^* \mid \x, \ti) }, \\
            \overline{\mu}_{\ti}(\x; \Gamma) &= \sup_{\y^* \in \mathcal{Y}} \mu_{\ti}(\x) + \frac{ \int_{\y^*}^{\infty} \rt(\y; \x) f(\y \mid \x, \ti) d\y }{ \alpha_{\ti}^{\prime}(\x; \Gamma) + \mathrm{P}(\Y > \y^* \mid \x, \ti) },
        \end{align*}
    \end{subequations}
    where $\rt(\y; \x) = (\y - \mu_{\ti}(\x))$ and $\alpha_{\ti}^{\prime}(\x; \Gamma) = \frac{\alpha_{\ti}(\x; \Gamma)}{\beta_{\ti}(\x; \Gamma) - \alpha_{\ti}(\x; \Gamma)}$. 
    \label{lem:density}
\end{lemma}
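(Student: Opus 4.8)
The plan is to reduce the functional optimizations defining $\overline{\mu}^{\Gamma}(\x,\ti)$ and $\underline{\mu}^{\Gamma}(\x,\ti)$ in \eqref{eq:mu_bounds} to scalar optimizations over a single threshold $\y^*$. The starting point is the characterization from Lemma~\ref{lem:double}, which rewrites $\mu(w_\ti;\x,\ti)$ as $\mu(\x,\ti)$ plus a ratio of two linear functionals of $w_\ti$, namely $\int \rt(\y)\, w_\ti(\y\mid\x)\, f(\y\mid\x,\ti)\,d\y$ over $\int w_\ti(\y\mid\x)\, f(\y\mid\x,\ti)\,d\y$. Because the constraint set $\mathcal{W}_\ti(\x;\Gamma)$ is the pointwise box $w_\ti(\y\mid\x)\in[\alpha_\ti(\x;\Gamma),\beta_\ti(\x;\Gamma)]$, maximizing (resp.\ minimizing) this ratio is a linear-fractional program over a box, for which I would invoke the standard bang-bang characterization (Lemma~1 of \citet{pmlr-v89-kallus19a}): the optimizer is attained at a weight function taking only the endpoint values $\alpha_\ti$ and $\beta_\ti$, with the switch governed by the sign of $\rt(\y)-\lambda^*$, where $\lambda^*$ is the optimal value of the ratio.

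The key simplification is that the residual $\rt(\y)=\y-\mu(\x,\ti)$ is monotone increasing in $\y$, so the sign condition $\rt(\y)-\lambda^*>0$ is equivalent to $\y>\y^*$ for the cutoff $\y^*=\mu(\x,\ti)+\lambda^*$. Thus the supremum-achieving weight is $w_\ti(\y)=\beta_\ti$ on $\{\y>\y^*\}$ and $\alpha_\ti$ on $\{\y\le\y^*\}$ (and the reversed assignment for the infimum), so instead of optimizing over all admissible $w_\ti$ it suffices to take a sup (resp.\ inf) over the scalar $\y^*\in\mathcal{Y}$. I would then substitute this threshold weight back into the ratio from Lemma~\ref{lem:double}.

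The remaining work is algebraic. After substitution, the numerator becomes $\alpha_\ti\int_{-\infty}^{\y^*}\rt(\y)f\,d\y+\beta_\ti\int_{\y^*}^{\infty}\rt(\y)f\,d\y$ and the denominator $\alpha_\ti\,\mathrm{P}(\Y\le\y^*\mid\x,\ti)+\beta_\ti\,\mathrm{P}(\Y>\y^*\mid\x,\ti)$. I would use the centering identity $\int\rt(\y)f(\y\mid\x,\ti)\,d\y=0$ (immediate since $\rt$ is the residual about $\mu(\x,\ti)$) to replace $\int_{-\infty}^{\y^*}\rt f\,d\y$ by $-\int_{\y^*}^{\infty}\rt f\,d\y$ in the upper-bound case, so the numerator collapses to $(\beta_\ti-\alpha_\ti)\int_{\y^*}^{\infty}\rt f\,d\y$; likewise the denominator becomes $\alpha_\ti+(\beta_\ti-\alpha_\ti)\,\mathrm{P}(\Y>\y^*\mid\x,\ti)$. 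Dividing numerator and denominator by $(\beta_\ti-\alpha_\ti)$ produces exactly $\alpha_{\ti}^{\Gamma \prime}(\x)=\alpha_\ti/(\beta_\ti-\alpha_\ti)$ in the denominator and the claimed form for $\overline{\mu}^{\Gamma}$; the infimum case is symmetric, swapping the roles of the two tails.

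The main obstacle I anticipate is rigorously justifying the bang-bang step, that the extremum of the linear-fractional objective over the box is attained at an endpoint-valued (threshold) weight, rather than the subsequent algebra. I would handle this exactly as in \citet{pmlr-v89-kallus19a}: writing the numerator and denominator of the ratio as $N(w_\ti)$ and $D(w_\ti)$, fix the optimal value $\lambda^*=\sup_{w_\ti} N(w_\ti)/D(w_\ti)$ and note that $\sup_{w_\ti}\bigl[N(w_\ti)-\lambda^* D(w_\ti)\bigr]=\sup_{w_\ti}\int(\rt(\y)-\lambda^*)\,w_\ti(\y)\,f(\y\mid\x,\ti)\,d\y$ is maximized pointwise by pushing $w_\ti(\y)$ to $\beta_\ti$ where $\rt(\y)>\lambda^*$ and to $\alpha_\ti$ where $\rt(\y)<\lambda^*$, and that (under the overlap and boundedness setup ensuring $D(w_\ti)$ stays positive) this pointwise maximizer is feasible and attains the ratio $\lambda^*$. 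Monotonicity of $\rt$ then turns the optimal superlevel set into a half-line, closing the reduction to a scalar sup/inf.
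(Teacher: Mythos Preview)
Your proposal is correct and follows essentially the same route as the paper: both invoke Lemma~\ref{lem:double} to pass to the residual form, appeal to the bang-bang/threshold characterization (Lemma~1 of \citet{pmlr-v89-kallus19a}) to reduce the functional optimization to a scalar one over $\y^*$, use the centering identity $\int \rt(\y)f(\y\mid\x,\ti)\,d\y=0$ to collapse the numerator, and divide through by $\beta_\ti-\alpha_\ti$ to obtain $\alpha_\ti^{\Gamma\prime}$. If anything, your write-up is more explicit than the paper's in justifying why the optimal $u\in\mathcal{U}^{ni}$ can be taken to be a hard threshold, a step the paper states without comment.
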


\begin{proof}
We prove the result for $\underline{\mu}_{\ti}(\x; \Gamma)$ and the result for $\overline{\mu}_{\ti}(\x; \Gamma)$ can be proved analogously. From \citet{pmlr-v89-kallus19a} Lemma 1,
\begin{equation}
    \begin{split}
        \underline{\mu}_{\ti}(\x) &= \inf_{w_\ti(\y \mid \x) \in [\alpha_{\ti}(\x; \Gamma), \beta_{\ti}(\x; \Gamma)]} \frac{\int \y w_\ti(\y \mid \x) f_{\ti}(\y \mid \x) d\y}{\int w_\ti(\y \mid \x) f_{\ti}(\y \mid \x) d\y} \\
        &= \inf_{u \in \mathcal{U}^{ni}} \frac{ \alpha_{\ti}(\x; \Gamma) \int \y f_{\ti}(\y \mid \x) d\y + (\beta_{\ti}(\x; \Gamma) - \alpha_{\ti}(\x; \Gamma)) \int u(\y) \y f_{\ti}(\y \mid \x) d\y }{ \alpha_{\ti}(\x; \Gamma) \int f_{\ti}(\y \mid \x) d\y + (\beta_{\ti}(\x; \Gamma) - \alpha_{\ti}(\x; \Gamma)) \int u(\y) f_{\ti}(\y \mid \x) d\y }, \\
        &= \underline{\mu}_{\ti}(\x; \Gamma)
    \end{split}
\end{equation}
where $\mathcal{U}^{ni} = \{ u: \mathcal{Y} \xrightarrow[]{} [0, 1] \mid u(\y \text{) is non-increasing} \}$. Therefore, from the equivalence in Equation \eqref{eq:from_kallus_to_us},
\begin{equation*}
    \begin{split}
        \underline{\mu}_{\ti}(\x; \Gamma) &= \!\! \inf_{u \in \mathcal{U}^{ni}} \!\! \mu_{\ti}(\x) + \frac{ \alpha_{\ti}(\x; \Gamma) \int (\y - \mu_{\ti}(\x)) f(\y \mid \x, \ti) d\y + (\beta_{\ti}(\x; \Gamma) - \alpha_{\ti}(\x; \Gamma)) \int u(\y) (\y - \mu_{\ti}(\x)) f(\y \mid \x, \ti) d\y }{ \alpha_{\ti}(\x; \Gamma) \int f(\y \mid \x, \ti) d\y + (\beta_{\ti}(\x; \Gamma) - \alpha_{\ti}(\x; \Gamma)) \int u(\y) f(\y \mid \x, \ti) d\y } \\
        &= \inf_{u \in \mathcal{U}^{ni}} \mu_{\ti}(\x) + \frac{ \alpha_{\ti}(\x; \Gamma) (\mu_{\ti}(\x) - \mu_{\ti}(\x)) + (\beta_{\ti}(\x; \Gamma) - \alpha_{\ti}(\x; \Gamma)) \int u(\y) (\y - \mu_{\ti}(\x)) f(\y \mid \x, \ti) d\y }{ \alpha_{\ti}(\x; \Gamma) \int f(\y \mid \x, \ti) d\y + (\beta_{\ti}(\x; \Gamma) - \alpha_{\ti}(\x; \Gamma)) \int u(\y) f(\y \mid \x, \ti) d\y } \\
        &= \inf_{u \in \mathcal{U}^{ni}} \mu_{\ti}(\x) + \frac{ (\beta_{\ti}(\x; \Gamma) - \alpha_{\ti}(\x; \Gamma)) \int u(\y) (\y - \mu_{\ti}(\x)) f(\y \mid \x, \ti) d\y }{ \alpha_{\ti}(\x; \Gamma) + (\beta_{\ti}(\x; \Gamma) - \alpha_{\ti}(\x; \Gamma)) \int u(\y) f(\y \mid \x, \ti) d\y } \\
        &= \inf_{u \in \mathcal{U}^{ni}} \mu_{\ti}(\x) + \frac{ \int u(\y) (\y - \mu_{\ti}(\x)) f(\y \mid \x, \ti) d\y }{ \alpha_{\ti}^{\prime}(\x; \Gamma) + \int u(\y) f(\y \mid \x, \ti) d\y } \\
        &= \inf_{y^* \in \mathcal{Y}} \mu_{\ti}(\x) + \frac{ \int_{-\infty}^{\y^*} (\y - \mu_{\ti}(\x)) f(\y \mid \x, \ti) d\y }{ \alpha_{\ti}^{\prime}(\x; \Gamma) + \int_{-\infty}^{\y^*} f(\y \mid \x, \ti) d\y } \\
        &= \inf_{y^* \in \mathcal{Y}} \mu_{\ti}(\x) + \frac{ \int_{-\infty}^{\y^*} (\y - \mu_{\ti}(\x)) f(\y \mid \x, \ti) d\y }{ \alpha_{\ti}^{\prime}(\x; \Gamma) + \mathrm{P}(\Y \leq \y^* \mid \x, \ti) } \\
        &= \inf_{y^* \in \mathcal{Y}} \mu_{\ti}(\x) + \frac{ \int_{-\infty}^{\y^*} \rt(\y; \x) f(\y \mid \x, \ti) d\y }{ \alpha_{\ti}^{\prime}(\x; \Gamma) + \mathrm{P}(\Y \leq \y^* \mid \x, \ti) }.
    \end{split}
    \label{eq:mu_bottom}
\end{equation*}
\end{proof}

\section{CATE Interval Estimator}
\label{sec:estimator_proof}
\renewcommand{\proofname}{Proof for Theorem 1}
\begin{proof}
    Here we prove that $\widehat{\underline{\mu}}_{\ti}(\x; \Gamma, \w) \xrightarrow[]{p} \underline{\mu}_{\ti}(\x; \Gamma)$, from which $\widehat{\overline{\mu}}_{\ti}(\x; \Gamma, \w) \xrightarrow[]{p} \overline{\mu}_{\ti}(\x; \Gamma)$ can be proved analogously. Note that $\xrightarrow[]{p}$ indicates convergence in probability. As a reminder
    \begin{equation}
        \widehat{\underline{\mu}}_{\ti}(\x; \Gamma, \w) = \inf_{\y^* \in \mathcal{Y}} \widehat{\mu}_{\ti}(\x; \w) + \frac{ \int_{-\infty}^{\y^*} (\y - \widehat{\mu}_{\ti}(\x; \w)) f(\y \mid \x, \ti, \w) d\y }{ \alpha_{\ti}^{\prime}(\x; \Gamma, \w) + \int_{-\infty}^{\y^*} f(\y \mid \x, \ti, \w) d\y } ,
    \end{equation}
    and
    \begin{equation}
        \underline{\mu}_{\ti}(\x; \Gamma) = \inf_{\y^* \in \mathcal{Y}} \mu_{\ti}(\x) + \frac{ \int_{-\infty}^{\y^*} (\y - \mu_{\ti}(\x)) f(\y \mid \x, \ti) d\y }{ \alpha_{\ti}^{\prime}(\x; \Gamma) + \mathrm{P}(\Y \leq \y^* \mid \x, \ti) }.
    \end{equation}
    Further, our assumptions are
    \begin{enumerate}
        \item $n \to \infty$, and $\x \in \D$.
        \item $\Y$ is a bounded random variable.
        \item $f(\y \mid \x, \ti, \w)$ converges in measure to $f(\y \mid \x, \ti)$. Specifically, $\lim_{n \to \infty} P(\{ \y \in \mathcal{Y}: \abs{f(\y \mid \x, \ti) - f(\y \mid \x, \ti, \w, \D_n)} \geq \epsilon\}) = 0$, for every $\epsilon \geq 0$, where $\D_n$ is a dataset of size $n$. Convergence in measure is a generalization of convergence in probability. 
        \item $\eh_{\ti}(\x; \w)$ and $\widehat{\mu}_{\ti}(\x; \w)$ are consistent estimators of $\E[\T=\ti \mid \X=\x]$ and $\E[\Y \mid \X=\x, \T=\ti]$.
        \item $\et(\x, \y)$ is bounded away from 0 and 1 uniformly over $\x \in \mathcal{X}$, $\y \in \mathcal{Y}$, and $\ti \in \{0, 1\}$.
    \end{enumerate}
    We need to show that $\lim_{n \to \infty} P(\abs{\widehat{\underline{\mu}}_{\ti}(\x; \Gamma, \w) -  \underline{\mu}_{\ti}(\x; \Gamma)} \geq \epsilon) = 0$, for all $\epsilon > 0$, where the parameters $\w$ are dependent on the size of the dataset $n$.
    First, we define the following quantities:
    \begin{subequations}
        \begin{align*}
            \kappa^{\y^*}_{\y}(\x, \ti; n) &= \int_{-\infty}^{\y^*} (\y - \widehat{\mu}_{\ti}(\x; \w)) f(\y \mid \x, \ti, \w) d\y, 
            \quad I^{\y^*}_{\y}(\x, \ti) = \int_{-\infty}^{\y^*} (\y - \mu_{\ti}(\x)) f(\y \mid \x, \ti) d\y,  \\
            \kappa^{\y^*}(\x, \ti; n) &= \int_{-\infty}^{\y^*} f(\y \mid \x, \ti, \w) d\y, 
            \qquad\qquad\qquad\ \, 
            I^{\y^*}(\x, \ti) = \int_{-\infty}^{\y^*} f(\y \mid \x, \ti) d\y,
        \end{align*}
    \end{subequations}
    so that
    \begin{subequations}
        \begin{align*}
            \widehat{\underline{\mu}}_{\ti}(\x; \Gamma, \w) = \inf_{\y^* \in \mathcal{Y}} \widehat{\mu}_{\ti}(\x; \w) + \frac{\kappa^{\y^*}_{\y}(\x, \ti; n)}{\alpha_{\ti}^{\prime}(\x; \Gamma, \w) + \kappa^{\y^*}(\x, \ti; n)},
            \quad \underline{\mu}_{\ti}(\x; \Gamma) = \inf_{\y^* \in \mathcal{Y}} \mu_{\ti}(\x) + \frac{\quad I^{\y^*}_{\y}(\x, \ti)}{\alpha_{\ti}^{\prime}(\x; \Gamma) + I^{\y^*}(\x, \ti)}.
        \end{align*}
    \end{subequations}
    
    For compactness, we use the following shorthand notation:  $\kappa^{\y^*}_{\y} \equiv \kappa^{\y^*}_{\y}(\x, \ti; n)$, $\kappa^{\y^*} \equiv \kappa^{\y^*}(\x, \ti; n)$, $I^{\y^*}_{\y} \equiv I^{\y^*}_{\y}(\x, \ti)$, $I^{\y^*} \equiv I^{\y^*}(\x, \ti)$, $\alpha_{\w}^{\prime} \equiv \alpha_{\ti}^{\prime}(\x; \Gamma, \w)$, and $\alpha^{\prime} \equiv \alpha_{\ti}^{\prime}(\x; \Gamma)$
    
    Then, we need to express $\abs{\widehat{\underline{\mu}}_{\ti}(\x; \Gamma, \w) -  \underline{\mu}_{\ti}(\x; \Gamma)}$ as a sum of products of the following 4 terms: $\Delta^1(n) = \left| \widehat{\mu}_{\ti}(\x; \w) - \mu_{\ti}(\x) \right|$, $\Delta^2(n) = \left| \alpha^{\prime} - \alpha_{\w}^{\prime} \right|$, $\Delta^3(n) = \sup_{\y^* \in \mathcal{Y}} \left| \kappa^{\y^*}_{\y} - I^{\y^*}_{\y} \right|$, and $\Delta^4(n)= \sup_{\y^* \in \mathcal{Y}} \left| I^{\y^*} - \kappa^{\y^*}  \right|$:
    \begin{subequations}
        \begin{align}
            |
                &\widehat{\underline{\mu}}_{\ti}(\x; \Gamma, \w) -  \underline{\mu}_{\ti}(\x; \Gamma)
            |
            \leq \sup_{\y^* \in \mathcal{Y}} 
            \left| 
                \widehat{\mu}_{\ti}(\x; \w) - \mu_{\ti}(\x) + \frac{\kappa^{\y^*}_{\y}}{\alpha_{\w}^{\prime} + \kappa^{\y^*}} - \frac{I^{\y^*}_{\y}}{\alpha^{\prime} + I^{\y^*}}
            \right|, \label{eq:th_1_a} \\
            &= \abs{\widehat{\mu}_{\ti}(\x; \w) - \mu_{\ti}(\x)} + 
            \sup_{\y^* \in \mathcal{Y}} 
            \left| 
                \frac{\kappa^{\y^*}_{\y}}{\alpha_{\w}^{\prime} + \kappa^{\y^*}} - \frac{I^{\y^*}_{\y}}{\alpha^{\prime} + I^{\y^*}}
            \right|, \label{eq:th_1_b} \\
            &\leq \abs{\widehat{\mu}_{\ti}(\x; \w) - \mu_{\ti}(\x)} + 
            \sup_{\y^* \in \mathcal{Y}} 
            \left\{ 
                \frac{
                    \abs{\kappa^{\y^*}_{\y}} \abs{\alpha^{\prime} - \alpha_{\w}^{\prime}}
                }{
                    \abs{\alpha_{\w}^{\prime} + \kappa^{\y^*}} \abs{\alpha^{\prime} + I^{\y^*}}
                } 
                +
                \frac{
                    \abs{\kappa^{\y^*}_{\y}} \abs{I^{\y^*} - \kappa^{\y^*}}
                }{
                    \abs{\alpha_{\w}^{\prime} + \kappa^{\y^*}} \abs{\alpha^{\prime} + I^{\y^*}}
                } 
                + \frac{
                    \abs{\kappa^{\y^*}_{\y} - I^{\y^*}_{\y}}
                }{
                    \abs{\alpha^{\prime} + I^{\y^*}}
                }
            \right\}, \label{eq:th_1_c} \\
            &= \abs{\widehat{\mu}_{\ti}(\x; \w) - \mu_{\ti}(\x)}  
            + \sup_{\y^* \in \mathcal{Y}} 
            \frac{
                \abs{\kappa^{\y^*}_{\y}} \abs{\alpha^{\prime} - \alpha_{\w}^{\prime}}
            }{
                \abs{\alpha_{\w}^{\prime} + \kappa^{\y^*}} \abs{\alpha^{\prime} + I^{\y^*}}
            } 
            + \sup_{\y^* \in \mathcal{Y}}
            \frac{
                \abs{\kappa^{\y^*}_{\y}} \abs{I^{\y^*} - \kappa^{\y^*}}
            }{
                \abs{\alpha_{\w}^{\prime} + \kappa^{\y^*}} \abs{\alpha^{\prime} + I^{\y^*}}
            } 
            + \sup_{\y^* \in \mathcal{Y}}
            \frac{
                \abs{\kappa^{\y^*}_{\y} - I^{\y^*}_{\y}}
            }{
                \abs{\alpha^{\prime} + I^{\y^*}}
            }, \label{eq:th_1_d} \\
            &= \Delta^1(n)  
            + \Delta^2(n) \sup_{\y^* \in \mathcal{Y}} 
            \frac{
                \abs{\kappa^{\y^*}_{\y}}
            }{
                \abs{\alpha_{\w}^{\prime} + \kappa^{\y^*}} \abs{\alpha^{\prime} + I^{\y^*}}
            } 
            + \Delta^4(n) \sup_{\y^* \in \mathcal{Y}}
            \frac{
                \abs{\kappa^{\y^*}_{\y}}
            }{
                \abs{\alpha_{\w}^{\prime} + \kappa^{\y^*}} \abs{\alpha^{\prime} + I^{\y^*}}
            } 
            + \Delta^3(n) \sup_{\y^* \in \mathcal{Y}}
            \frac{
                1
            }{
                \abs{\alpha^{\prime} + I^{\y^*}}
            }. \label{eq:th_1_e}
        \end{align}
    \end{subequations}
    Line \eqref{eq:th_1_a} by Lemma 3 in \citet{pmlr-v89-kallus19a}. Lines \eqref{eq:th_1_a} - \eqref{eq:th_1_b} by Lemma \ref{lem:sup_add}. Lines \eqref{eq:th_1_b} - \eqref{eq:th_1_c} by Lemma \ref{lem:abs_diff} below. Lines \eqref{eq:th_1_c} - \eqref{eq:th_1_d} by Lemma \ref{lem:sup_add}. Lines \eqref{eq:th_1_d} - \eqref{eq:th_1_e} by Lemma \ref{lem:sup_mult}.
    
    So, we now need only prove that $\Delta^1(n) \xrightarrow[]{p} 0$,  $\Delta^2(n) \xrightarrow[]{p} 0$, $\Delta^3(n) \xrightarrow[]{p} 0$, and $\Delta^4(n) \xrightarrow[]{p} 0$, when $n \to \infty$. 
    Note that both $\Delta^1(n) \xrightarrow[]{p} 0$ and  $\Delta^2(n) \xrightarrow[]{p} 0$ are covered by Assumption 4 of Theorem \ref{th:convergence}; namely, \emph{$\eh_{\ti}(\x; \w)$ and $\widehat{\mu}_{\ti}(\x; \w)$ are consistent estimators of $\E[\T=\ti \mid \X=\x]$ and $\E[\Y \mid \X=\x, \T=\ti]$}. 
    
    First, we prove that $\Delta^4(n) \xrightarrow[]{p} 0$.
    
    \textbf{Prove that}  $\sup_{\y^* \in \mathcal{Y}} \left| I^{\y^*} - \kappa^{\y^*} \right| \xrightarrow[]{p} 0$
    \begin{subequations}
        \begin{align*}
            \sup_{\y^* \in \mathcal{Y}} \left| I^{\y^*} - \kappa^{\y^*} \right| 
            &= \sup_{\y^* \in \mathcal{Y}} 
            \left| 
                \int_{-\infty}^{\y^*} f(\y \mid \x, \ti, \w) d\y - \int_{-\infty}^{\y^*} f(\y \mid \x, \ti) d\y 
            \right| \\
            &= \sup_{\y^* \in \mathcal{Y}}
            \left| 
                \textcolor{green}{P(\y \leq \y^* \mid \x, \ti, \w) - P(\y \leq \y^* \mid \x, \ti)} 
            \right| \\
        \end{align*}
    \end{subequations}
    Convergence in probability implies convergence in distribution ($\lim_{n \to \infty} P_n(\X \leq \x) = P(\X \leq \x)$), so by Assumption 3 in Theorem \ref{th:convergence}
    \begin{subequations}
        \begin{align*}
            \lim_{n \to \infty}P\left(
                \left|
                    \sup_{\y^* \in \mathcal{Y}} \left| I^{\y^*} - \kappa^{\y^*} \right|
                \right| \geq \epsilon
            \right)
            &= \lim_{n \to \infty}P\left(
                \sup_{\y^* \in \mathcal{Y}}
                \left| 
                    \textcolor{green}{P_{\w}(\y \leq \y^* \mid \x, \ti) - P(\y \leq \y^* \mid \x, \ti)} 
                \right|  \geq \epsilon
            \right)\\
            &= P\left(
                \sup_{\y^* \in \mathcal{Y}}
                \left| 
                    P(\y \leq \y^* \mid \x, \ti) - P(\y \leq \y^* \mid \x, \ti) 
                \right|  \geq \epsilon
            \right)\\
            &= P\left(
                \sup_{\y^* \in \mathcal{Y}}
                \left| 
                    0 
                \right|  \geq \epsilon
            \right)\\
            &= P\left(
                0   \geq \epsilon
            \right)\\
            &= 0
        \end{align*}
    \end{subequations}

    Finally, we prove $\Delta^3(n) \xrightarrow[]{p} 0$.
    
    \textbf{Prove that}  $\sup_{\y^* \in \mathcal{Y}} \left| \kappa^{\y^*}_{\y} - I^{\y^*}_{\y} \right| \xrightarrow[]{p} 0$
    \begin{equation*}
        \begin{split}
            \sup_{\y^* \in \mathcal{Y}} & \abs{ \kappa^{\y^*}_{\y} - I^{\y^*}_{\y} } 
            = \sup_{\y^* \in \mathcal{Y}} 
            \left| 
                \int_{-\infty}^{\y^*} \!\!\!\ (\y - \widehat{\mu}_{\ti}(\x; \w)) f(\y \mid \x, \ti, \w) d\y - \int_{-\infty}^{\y^*} \!\!\! (\y - \mu_{\ti}(\x)) f(\y \mid \x, \ti) d\y 
            \right|, \\
            &= \sup_{\y^* \in \mathcal{Y}}
            \left|
                \textcolor{blue}{\int_{-\infty}^{\y^*} \!\!\!\! \y f(\y \mid \x, \ti, \w) d\y  - \int_{-\infty}^{\y^*} \!\!\!\! \y f(\y \mid \x, \ti) d\y} 
                + \mu_{\ti}(\x) \int_{-\infty}^{\y^*} \!\!\!\! f(\y \mid \x, \ti) d\y
                - \widehat{\mu}_{\ti}(\x; \w) \int_{-\infty}^{\y^*} \!\!\!\! f(\y \mid \x, \ti, \w) d\y
            \right|, \\
            &= \sup_{\y^* \in \mathcal{Y}}
            \left|
                \textcolor{blue}{\int_{-\infty}^{\y^*} \!\!\!\! \y f_{\w}(\y) d\y  - \int_{-\infty}^{\y^*} \!\!\!\! \y f(\y) d\y}  
                + \mu_{\ti}(\x) \int_{-\infty}^{\y^*} \!\!\!\! f(\y) d\y
                - \widehat{\mu}_{\ti}(\x; \w) \int_{-\infty}^{\y^*} \!\!\!\! f_{\w}(\y) d\y
            \right|, \\
            &= \sup_{\y^* \in \mathcal{Y}}
            \left|
                \textcolor{blue}{\int_{-\infty}^{\y^*} \!\!\!\! \y f_{\w}(\y) d\y  - \int_{-\infty}^{\y^*} \!\!\!\! \y f(\y) d\y}  
                + (\textcolor{purple}{\mu_{\ti}(\x) - \widehat{\mu}_{\ti}(\x; \w)} + \widehat{\mu}_{\ti}(\x; \w)) \int_{-\infty}^{\y^*} \!\!\!\! f(\y) d\y
                - \widehat{\mu}_{\ti}(\x; \w) \textcolor{green}{\int_{-\infty}^{\y^*} \!\!\!\! (f_{\w}(\y) - f(\y)} + f(\y)) \textcolor{green}{d\y}
            \right|, \\
            &= \sup_{\y^* \in \mathcal{Y}}
            \left|
                \textcolor{blue}{\int_{-\infty}^{\y^*} \!\!\!\! \y f_{\w}(\y) d\y  - \int_{-\infty}^{\y^*} \!\!\!\! \y f(\y) d\y}  
                + (\textcolor{purple}{\mu_{\ti}(\x) - \widehat{\mu}_{\ti}(\x; \w)}) \int_{-\infty}^{\y^*} \!\!\!\! f(\y) d\y
                - \widehat{\mu}_{\ti}(\x; \w) \textcolor{green}{\int_{-\infty}^{\y^*} \!\!\!\! \left( f_{\w}(\y) - f(\y) \right) d\y}
            \right|, \\
            &= \sup_{\y^* \in \mathcal{Y}}
            \left|
                \textcolor{blue}{\int_{-\infty}^{\y^*} \!\!\!\! \y f_{\w}(\y) d\y  - \int_{-\infty}^{\y^*} \!\!\!\! \y f(\y) d\y} 
                + (\textcolor{purple}{\mu_{\ti}(\x) - \widehat{\mu}_{\ti}(\x; \w)}) \int_{-\infty}^{\y^*} \!\!\!\! f(\y) d\y
                - \widehat{\mu}_{\ti}(\x; \w)  \left( \textcolor{green}{P_{\w}(\y \leq \y^* \mid \x, \ti) - P(\y \leq \y^* \mid \x, \ti)} \right)
            \right|.
        \end{split}
    \end{equation*}
    As a first step, we can use the result for $\Delta^4(n)$ to remove the \textcolor{green}{green term} from the supremum and now we need to show that
    \begin{equation*}
        \lim_{n \to \infty}P\left(
            \sup_{\y^* \in \mathcal{Y}}
            \left| 
                \textcolor{blue}{\int_{-\infty}^{\y^*} \y f_{\w}(\y) d\y  - \int_{-\infty}^{\y^*} \y f(\y) d\y}  
                + (\textcolor{purple}{\mu_{\ti}(\x) - \widehat{\mu}_{\ti}(\x; \w)}) \int_{-\infty}^{\y^*} f(\y) d\y
            \right|  \geq \epsilon.
        \right) = 0
    \end{equation*}
    Next, under assumption 4 of Theorem \ref{th:convergence} we have $\textcolor{purple}{\mu_{\ti}(\x) - \widehat{\mu}_{\ti}(\x; \w)}\xrightarrow[]{p} 0$, and we are left finally to show that
    \begin{equation*}
        \lim_{n \to \infty}P\left(
            \sup_{\y^* \in \mathcal{Y}}
            \left| 
                \textcolor{blue}{\int_{-\infty}^{\y^*} \y f_{\w}(\y) d\y  - \int_{-\infty}^{\y^*} \y f(\y) d\y}  
            \right|  \geq \epsilon.
        \right) = 0
    \end{equation*}
    Assumption 2 of Theorem \ref{th:convergence} states that $\Y$ is a bounded random variable. As such, there exists a $g(\y)$ such that $\abs{\y f_{\w}(\y)} \leq g(\y)$ for all $n$ and $y \in \mathcal{Y}$. Therefore, in conjunction with Assumption 3, by Lebesgue's dominated convergence theorem we have $\lim_{n \to \infty} \int_{-\infty}^{\y^*} \y f_{\w}(\y) d\y = \int_{-\infty}^{\y^*} \y f(\y) d\y $ 
    
    \begin{equation*}
        \begin{split}
            \lim_{n \to \infty}P\left(
                \sup_{\y^* \in \mathcal{Y}}
                \left| 
                    \textcolor{blue}{\int_{-\infty}^{\y^*} \y f_{\w}(\y) d\y  - \int_{-\infty}^{\y^*} \y f(\y) d\y} 
                \right|  \geq \epsilon.
            \right)
            &= 
            P\left(
                \sup_{\y^* \in \mathcal{Y}}
                \left| 
                    \lim_{n \to \infty} \textcolor{blue}{\int_{-\infty}^{\y^*} \y f_{\w}(\y) d\y  - \int_{-\infty}^{\y^*} \y f(\y) d\y} 
                \right|  \geq \epsilon.
            \right) \\
            &= P\left(
                \sup_{\y^* \in \mathcal{Y}}
                \left| 
                    \int_{-\infty}^{\y^*} \y f(\y) d\y  - \int_{-\infty}^{\y^*} \y f(\y) d\y 
                \right|  \geq \epsilon.
            \right) \\
            &= P\left(
                \sup_{\y^* \in \mathcal{Y}}
                \left| 
                    0 
                \right|  \geq \epsilon.
            \right) \\
            &= 0
        \end{split}
    \end{equation*}
    
    Therefore, $\widehat{\underline{\mu}}_{\ti}(\x; \Gamma, \w) \xrightarrow[]{p} \underline{\mu}_{\ti}(\x; \Gamma)$, and $\widehat{\overline{\mu}}_{\ti}(\x; \Gamma, \w) \xrightarrow[]{p} \overline{\mu}_{\ti}(\x; \Gamma)$ can be proved analogously, which concludes our proof that both $\widehat{\underline{\tau}}(\x; \Gamma, \w) \xrightarrow[]{p} \underline{\tau}(\x; \Gamma)$, and $\widehat{\overline{\tau}}(\x; \Gamma, \w) \xrightarrow[]{p} \overline{\tau}(\x; \Gamma)$.
\end{proof}

\begin{lemma}
    Let $\kappa^{\y^*}_{\y}$, $\kappa^{\y^*}$, $I^{\y^*}_{\y}$, $I^{\y^*}$, $\alpha_{\w}^{\prime}$, and $\alpha^{\prime}$ take real values. Further, let $\alpha_{\w}^{\prime} + \kappa^{\y^*} > 0$ and $\alpha^{\prime} + I^{\y^*} > 0$. Then,
    \begin{equation}
        \left| 
            \frac{\kappa^{\y^*}_{\y}}{\alpha_{\w}^{\prime} + \kappa^{\y^*}} - \frac{I^{\y^*}_{\y}}{\alpha^{\prime} + I^{\y^*}}
        \right|
        \leq \frac{\left| \kappa^{\y^*}_{\y} \right| \left| \alpha^{\prime} - \alpha_{\w}^{\prime} \right|}{\left| \alpha_{\w}^{\prime} + \kappa^{\y^*} \right| \left| \alpha^{\prime} + I^{\y^*} \right|} + \frac{\left| \kappa^{\y^*}_{\y} \right| \left| I^{\y^*} - \kappa^{\y^*} \right|}{\left| \alpha_{\w}^{\prime} + \kappa^{\y^*} \right| \left| \alpha^{\prime} + I^{\y^*} \right|} + \frac{\left| \kappa^{\y^*}_{\y} - I^{\y^*}_{\y} \right|}{\left| \alpha^{\prime} + I^{\y^*} \right|}
    \end{equation}
    \renewcommand{\proofname}{Proof}
    \begin{proof}
        \begin{subequations}
            \begin{align}
                \left| 
                    \frac{\kappa^{\y^*}_{\y}}{\alpha_{\w}^{\prime} + \kappa^{\y^*}} - \frac{I^{\y^*}_{\y}}{\alpha^{\prime} + I^{\y^*}}
                \right|
                &\leq  \left| \frac{\kappa^{\y^*}_{\y}}{\alpha_{\w}^{\prime} + \kappa^{\y^*}} - \frac{\kappa^{\y^*}_{\y}}{\alpha^{\prime} + I^{\y^*}} \right| + \left| \frac{\kappa^{\y^*}_{\y}}{\alpha^{\prime} + I^{\y^*}} - \frac{I^{\y^*}_{\y}}{\alpha^{\prime} + I^{\y^*}} \right| \label{eq:abs_diff_a} \\
                &= \left| \frac{\kappa^{\y^*}_{\y}}{\alpha_{\w}^{\prime} + \kappa^{\y^*}} - \frac{\kappa^{\y^*}_{\y}}{\alpha^{\prime} + I^{\y^*}} \right| + \frac{\left| \kappa^{\y^*}_{\y} - I^{\y^*}_{\y} \right|}{\left| \alpha^{\prime} + I^{\y^*} \right|} \label{eq:abs_diff_b} \\
                &= \left| \frac{\kappa^{\y^*}_{\y} (\alpha^{\prime} + I^{\y^*}) - \kappa^{\y^*}_{\y} (\alpha_{\w}^{\prime} + \kappa^{\y^*})}{(\alpha_{\w}^{\prime} + \kappa^{\y^*}) (\alpha^{\prime} + I^{\y^*})} \right| + \frac{\left| \kappa^{\y^*}_{\y} - I^{\y^*}_{\y} \right|}{\left| \alpha^{\prime} + I^{\y^*} \right|} \label{eq:abs_diff_c} \\
                &= \left| \frac{\kappa^{\y^*}_{\y} (\alpha^{\prime} - \alpha_{\w}^{\prime})}{(\alpha_{\w}^{\prime} + \kappa^{\y^*}) (\alpha^{\prime} + I^{\y^*})} + \frac{\kappa^{\y^*}_{\y} (I^{\y^*} - \kappa^{\y^*})}{(\alpha_{\w}^{\prime} + \kappa^{\y^*}) (\alpha^{\prime} + I^{\y^*})} \right| + \frac{\left| \kappa^{\y^*}_{\y} - I^{\y^*}_{\y} \right|}{\left| \alpha^{\prime} + I^{\y^*} \right|} \label{eq:abs_diff_d} \\
                &\leq \left| \frac{\kappa^{\y^*}_{\y} (\alpha^{\prime} - \alpha_{\w}^{\prime})}{(\alpha_{\w}^{\prime} + \kappa^{\y^*}) (\alpha^{\prime} + I^{\y^*})} \right| + \left| \frac{\kappa^{\y^*}_{\y} (I^{\y^*} - \kappa^{\y^*})}{(\alpha_{\w}^{\prime} + \kappa^{\y^*}) (\alpha^{\prime} + I^{\y^*})} \right| + \frac{\left| \kappa^{\y^*}_{\y} - I^{\y^*}_{\y} \right|}{\left| \alpha^{\prime} + I^{\y^*} \right|} \label{eq:abs_diff_e} \\
                &= \frac{\left| \kappa^{\y^*}_{\y} \right| \left| \alpha^{\prime} - \alpha_{\w}^{\prime} \right|}{\left| \alpha_{\w}^{\prime} + \kappa^{\y^*} \right| \left| \alpha^{\prime} + I^{\y^*} \right|} + \frac{\left| \kappa^{\y^*}_{\y} \right| \left| I^{\y^*} - \kappa^{\y^*} \right|}{\left| \alpha_{\w}^{\prime} + \kappa^{\y^*} \right| \left| \alpha^{\prime} + I^{\y^*} \right|} + \frac{\left| \kappa^{\y^*}_{\y} - I^{\y^*}_{\y} \right|}{\left| \alpha^{\prime} + I^{\y^*} \right|} \label{eq:abs_diff_f}
            \end{align}
        \end{subequations}
    
        Line \eqref{eq:abs_diff_a} by the triangle inequality for absolute values: $|a - b| \leq |a - c| + |c - b|$. Lines \eqref{eq:abs_diff_a} - \eqref{eq:abs_diff_b} by the right-distributive property for division and preservation of division property for absolute values: $|\frac{a}{b}| = \frac{|a|}{|b|}$. Lines \eqref{eq:abs_diff_b} - \eqref{eq:abs_diff_c} by cross multiplication. Lines \eqref{eq:abs_diff_c} - \eqref{eq:abs_diff_d} by successive application of the distributive property for multiplication and the right-distributive property for division. Lines \eqref{eq:abs_diff_d} - \eqref{eq:abs_diff_e} by the subadditivity property of absolute values. Lines \eqref{eq:abs_diff_e} - \eqref{eq:abs_diff_f} by successive applications of the multiplicativity ($|ab| = |a||b|$) and preservation of division properties for absolute values.
    \end{proof}
    \label{lem:abs_diff}
\end{lemma}

\begin{lemma}
    For sets $A$ and $B$ $\sup(A + B) = \sup(A) + \sup(B)$ \citep{zakon2004mathematical}.
    \label{lem:sup_add}
\end{lemma}

\begin{lemma}
    If $A$ and $B$ are non-empty sets of positive real numbers then $\sup(AB) = \sup(A)\sup(B)$ \citep{zakon2004mathematical}.
    \label{lem:sup_mult}
\end{lemma}

\section{Datasets}
\label{a:datasets}

\subsection{Simulated Data}
\label{a:datasets_simulated}

The simulated dataset presented by \citet{pmlr-v89-kallus19a} is described by the following structural causal model (SCM):
\begin{subequations}
    \begin{align}
        \ui &\coloneqq N_{\ui}, \\
        \mathrm{x} &\coloneqq N_{\mathrm{x}}, \\
        \ti &\coloneqq N_{\ti}, \\
        \y &\coloneqq (2\ti - 1)\mathrm{x} + (2\ti - 1) - 2 \sin(2(2\ti - 1)\mathrm{x}) - 2 (2\ui - 1) (1 + 0.5\mathrm{x}) + N_{\y},
    \end{align}
\end{subequations}
where $N_{\ui} \sim \text{Bern}(0.5)$, $N_{\mathrm{x}} \sim \text{Unif}[-2, 2]$, $N_{\ui} \indep N_{\mathrm{x}}$, $N_{\ti} \sim \text{Bern}(e(\mathrm{x}, \ui))$, $e(\mathrm{x}, \ui) = \frac{\ui}{\alpha_{\ti}(\mathrm{x}; \Gamma^*)} + \frac{1 - \ui}{\beta_{\ti}(\mathrm{x}; \Gamma^*)}$, $e(\mathrm{x}) = \text{sigmoid}(0.75 \mathrm{x} + 0.5)$, and $N_{\y} \sim \mathcal{N}(0, 1)$. 

Remember that only $\mathrm{x}$, $\ti$, and $\y$ are observed. So the bias induced by hidden confounding at $\mathrm{x}$ is given by
\begin{equation}
    \tilde{\tau}(\mathrm{x}) - \tau(\mathrm{x}) = 2 (2 + \mathrm{x}) \left(P(\ui = 1 \mid \T = 1, \mathrm{X} = \mathrm{x}) - P(\ui = 1 \mid \T = 0, \mathrm{X} = \mathrm{x})\right),
\end{equation}
where $\tilde{\tau}(\mathrm{x})$ is the confounded CATE estimate.

Each random realization of the simulated dataset generates 1000 training examples, 100 validation examples, and 1000 test examples. In the experiments we report results over 50 random realizations. The seeds for the random number generators are $i$, $i + 1$, and $i + 2$; $\{i \in [0, 1, \dots, 49]\}$, for the training, validation, and test sets, respectively. Code is available in file /library/datasets/synthetic.py on github at \url{https://github.com/anndvision/quince}.

\subsection{HC-MNIST}
\label{a:datasets_hcmnist}
HC-MNIST is an extension of the above dataset with high-dimensional covariates $\x$.
Specifically, $\x$ are MNIST digits. 
HC-MNIST is described by the following SCM:
\begin{subequations}
    \begin{align}
        \ui &\coloneqq N_{\ui}, \\
        \x &\coloneqq N_{\x}, \\
        \phi &\coloneqq \left( \text{clip}\left( \frac{\mu_{N_{\x}} - \mu_{\mathrm{c}}}{\sigma_{\mathrm{c}}}; -1.4, 1.4 \right) - \text{Min}_{\mathrm{c}} \right) \frac{\text{Max}_{\mathrm{c}} - \text{Min}_{\mathrm{c}}}{1.4 - \text{-}
        1.4}\\
        \ti &\coloneqq N_{\ti}, \\
        \y &\coloneqq (2\ti - 1)\phi + (2\ti - 1) - 2 \sin(2(2\ti - 1)\phi) - 2 (2\ui - 1) (1 + 0.5\phi) + N_{\y},
    \end{align}
\end{subequations}
where $N_{\ui}$, $N_{\ti}$ (swapping $\mathrm{x}$ for $\phi$), and $N_{\y}$ are as described in Appendix \ref{a:datasets_simulated}. 
$N_{\x}$ is a sample of an MNIST image. The sampled image has a corresponding label $c \in [0, \dots, 9]$. 
$\mu_{N_{\x}}$ is the average intensity of the sampled image. $\mu_{\mathrm{c}}$ and $\sigma_{\mathrm{c}}$ are the mean and standard deviation of the average image intensities over all images with label $\mathrm{c}$ in the MNIST training set. 
In other words, $\mu_{\mathrm{c}} = \E[\mu_{N_{\x}} \mid \mathrm{c}]$ and $\sigma^2_{\mathrm{c}} = \var[\mu_{N_{\x}} \mid \mathrm{c}]$. 
To map the high dimensional images $\x$ onto a one-dimensional manifold $\phi$ with the same domain as $\mathrm{x} \in [-2, 2]$ above, we first clip the standardized average image intensity on the range $(-1.4, 1.4)$. 
Each digit class has its own domain in $\phi$, so there is a linear transformation of the clipped value onto the range $[\text{Min}_{\mathrm{c}}, \text{Max}_{\mathrm{c}}]$. 
Finally, $\text{Min}_{\mathrm{c}} = -2 + \frac{4}{10} \mathrm{c}$, and $\text{Max}_{\mathrm{c}} = -2 + \frac{4}{10}(\mathrm{c} + 1)$.

For each random realization of the dataset, the MNIST training set is split into training ($n=35000$) and validation ($n=15000$) subsets using the scikit-learn function train\_test\_split(). 
The test set is generated using the MNIST test set ($n=10000$). 
The random seeds are $\{i \in [0, 1, \dots, 19]\}$ for the 20 random realizations generated. Code to generate this dataset is available in file /library/datasets/hcmnist.py on github at \url{https://github.com/anndvision/quince}.

\subsection{IHDP Hidden Confounding}
\label{a:datasets_hcihdp}
The experimental data from the Infant Health and Development Program (IHDP) are used by \citet{hill2011bayesian} to generate simulated outcomes. 
The treatment group reveives ``intensive high-quality child care and home visits from a trained provider." \citet{hill2011bayesian} uses ``measurements on the child--birth weight, head circumference, weeks born preterm, birth order, first born, neonatal health index, sex, twin status--as well as behaviors engaged in during pregnancy--smoked cigarettes, drank alcohol, took drugs--and measurements on the mother at the time she gave birth--age, marital status, educational attainment (did not graduate from high school, graduated from high school, attended some college but did not graduate, graduated from college), whether she worked during pregnancy, whether she received prenatal care--and the site (8 total) in which the family resided at the start of the intervention. 
There are 6 continuous covariates and 19 binary covariates.'' 
\citet{hill2011bayesian} excludes ``a nonrandom portion of the treatment group: all children with nonwhite mothers," in order to simulate an observational study. 
Table \ref{tab:ihdp_covariates} enumerates the included covariates.
There are 139 examples in the treatment group and 608 examples in the control group, for a total of 747 examples.

\begin{table}[ht]
    \centering
    \caption{\textbf{IHDP Covariates} Binary covariates $x_{9}-\mathrm{x}_{18}$ are attributes of the mother. Mother's education level ``College" indicated by covariates $\mathrm{x}_{10}-\mathrm{x}_{12}$ all zero. Site 8 indicated by covariates $\mathrm{x}_{19}-\mathrm{x}_{25}$ all zero. We show the frequency of occurrence for each binary covariate $p(\mathrm{x} = 1)$, as well as the adjusted mutual information $I(\mathrm{x}; \ti)$ between the binary covariate and the treatment variable.}
    \begin{tabular}{ll|llcc}
        \toprule
        \multicolumn{2}{l|}{Continuous} & \multicolumn{4}{l}{Binary}  \\
        Covariate           & Description               & Covariate         & Description                       & $I(\mathrm{x}; \ti)$  & $p(\mathrm{x} = 1)$   \\
        \midrule
        $\mathrm{x}_{1}$    & birthweight               & $\mathrm{x}_{7}$  & child's gender (female=1)         & 0.00                  & 0.51                  \\
        $\mathrm{x}_{2}$    & head circumference        & $\mathrm{x}_{8}$  & is child a twin                   & 0.00                  & 0.09                  \\
        $\mathrm{x}_{3}$    & number of weeks pre-term  & $\mathrm{x}_{9}$  & married when child born           & \textbf{0.02}         & \textbf{0.52}         \\
        $\mathrm{x}_{4}$    & birth order               & $\mathrm{x}_{10}$ & left High School                  & 0.00                  & 0.36                  \\
        $\mathrm{x}_{5}$    &``neo-natal health index"  & $\mathrm{x}_{11}$ & completed High School             & 0.00                  & 0.27                  \\
        $\mathrm{x}_{6}$    & mom's age                 & $\mathrm{x}_{12}$ & some College                      & 0.00                  & 0.22                  \\
                            &                           & $\mathrm{x}_{13}$ & child is first born               & 0.00                  & 0.36                  \\
                            &                           & $\mathrm{x}_{14}$ & smoked cigarettes when pregnant   & \textbf{0.01}         & \textbf{0.48}         \\
                            &                           & $\mathrm{x}_{15}$ & consumed alcohol when pregnant    & 0.00                  & 0.14                  \\
                            &                           & $\mathrm{x}_{16}$ & used drugs when pregnant          & 0.00                  & 0.96                  \\
                            &                           & $\mathrm{x}_{17}$ & worked during pregnancy           & \textbf{0.01}         & \textbf{0.59}         \\
                            &                           & $\mathrm{x}_{18}$ & received any prenatal care        & \textbf{0.01}         & 0.96                  \\
                            &                           & $\mathrm{x}_{19}$ & site 1                            & 0.00                  & 0.14                  \\
                            &                           & $\mathrm{x}_{20}$ & site 2                            & \textbf{0.01}         & 0.14                  \\
                            &                           & $\mathrm{x}_{21}$ & site 3                            & 0.00                  & 0.16                  \\
                            &                           & $\mathrm{x}_{22}$ & site 4                            & \textbf{0.01}         & 0.08                  \\
                            &                           & $\mathrm{x}_{23}$ & site 5                            & \textbf{0.02}         & 0.07                  \\
                            &                           & $\mathrm{x}_{24}$ & site 6                            & \textbf{0.01}         & 0.13                  \\
                            &                           & $\mathrm{x}_{25}$ & site 7                            & \textbf{0.02}         & 0.16                  \\
        \bottomrule
    \end{tabular}
    \label{tab:ihdp_covariates}
\end{table}

Response surface B, designed by \citet{hill2011bayesian}, is described by the following SCM:
\begin{subequations}
    \begin{align}
        \x &\coloneqq N_{\x}, \\
        \ti &\coloneqq N_{\ti}, \\
        \y &\coloneqq (\ti - 1) \left( \exp(\beta_{\x}(\x + \mathbf{w})) + N_{\Yzero} \right) + \ti \left( \beta_{\x}\x - \omega^s + N_{\Yone}) \right),
    \end{align}
\end{subequations}
where $(N_{\x}, N_{\ti}) \sim p_{\D}(\{\mathrm{x}_{1}, \dots \mathrm{x}_{25} \}, \ti)$, $N_{\Yzero} \sim \mathcal{N}(0, 1)$, and $N_{\Yone} \sim \mathcal{N}(0, 1)$. The coefficients $\beta_{\x}$ are a vector of randomly sampled values $( 0.0, 0.1, 0.2, 0.3, 0.4)$ with probabilities $(0.6, 0.1, 0.1, 0.1, 0.1)$. \citet{hill2011bayesian} describes $\omega^s$ as follows: ``For the $s$th simulation, [$\w^s$] is chosen in the overlap setting, where we estimate the effect of the treatment on the treated [(CATT)], such that CATT equals 4; similarly it was chosen in the incomplete setting, where we estimate the effect of the treatment on the controls [(CATC)], so that CATC equals 4." An offset vector $\mathbf{w}$, equal in dimension to $\x$, with every value set to 0.5, is added to $\x$.

To induce hidden confounding, we need to select a variable $\ui$ that is associated with the treatment that will be hidden from the CATE interval estimator, and design a response surface where the outcome will always be affected by $\ui$. In Table \ref{tab:ihdp_covariates}, we list 3 potential candidates for $\ui$: $\mathrm{x}_{9}$, $\mathrm{x}_{14}$, and $\mathrm{x}_{17}$. Each of these variable have a non-negligible association with the treatment, as indicated by the adjusted mutual information score $I(\mathrm{x}; \ti)$, and have a frequency of taking the value 1 at around 0.5 (increasing the chances that we will have both positive and negative examples in each of the training, validation, and testing splits). Here we select $\mathrm{x}_{9}$ and define the following SCM:
\begin{subequations}
    \begin{align}
        \ui &\coloneqq N_{\ui}, \\
        \x &\coloneqq N_{\x}, \\
        \ti &\coloneqq N_{\ti}, \\
        \y &\coloneqq (\ti - 1) ( \exp(\beta_{\x}(\x + \mathbf{w}) + \beta_{\ui}(\ui + 0.5)) + N_{\Yzero}) + \ti (\beta_{\x}\x + \beta_{\ui}\ui - \omega^s + N_{\Yone})),
    \end{align}
\end{subequations}
where $(N_{\ui}, N_{\x}, N_{\ti}) \sim p_{\D}(\mathrm{x}_{9}, \{\mathrm{x}_{1}, \dots, \mathrm{x}_{8}, \mathrm{x}_{10}, \dots, \mathrm{x}_{25} \}, \ti)$, $N_{\Yzero} \sim \mathcal{N}(0, 1)$, and $N_{\Yone} \sim \mathcal{N}(0, 1)$.
The coefficient $\beta_{\ui}$ is randomly sampled from $( 0.1, 0.2, 0.3, 0.4, 0.5)$ with probabilities $(0.2, 0.2, 0.2, 0.2, 0.2)$. The remaining parameters--$\beta_{\x}$, $\omega^s$, and $\w$--are given as above, taking into account $\ui$.

For each random realization of the dataset, the IHDP data is split into training ($n=470$), validation ($n=202$) and test ($n=75$) subsets using the scikit-learn function train\_test\_split().
The random seeds for both splitting and outcome generation are $\{i \in [0, 1, \dots, 999]\}$ for the 1000 realizations generated. 
Code to generate this dataset is available in file /library/datasets/ihdp.py on github at \url{https://github.com/anndvision/quince}.

\section{Implementation Details}
\label{a:implementation_details}

Experiments for the Simulated and IHDP datasets were run using a single NVIDIA GeForce GTX 1080 ti, an Intel(R) Core(TM) i7-8700K, on a desktop computer with 16GB of RAM. 
Experiments for the HCMNIST dataset were run using 4 NVIDIA GeForce RTX 2080 ti GPUs, an Intel(R) Core(TM) i9-9900K, on a server with 64GB of RAM. 
Code is written in python. Packages used include PyTorch \citep{paszke2019pytorch}, scikit-learn \citep{scikit-learn}, Ray \cite{moritz2018ray}, NumPy, SciPy, and Matplotlib. 
We use ray tune \citep{liaw2018tune} with the hyperopt \cite{pmlr-v28-bergstra13} search algorithm to optimize our network hyper-parameters. 
The hyper-parameters we consider are accounted for in Table \ref{tab:search_space}. The hyper-paramter optimization objective for each dataset is the expected batch-wise log-likelihood of the validation data for a single dataset realization with random seed 1331. 

Each experiment is replicated using the training, validation, and testing datasets described in the previous section.

Code to replicate these experiments is available at \url{https://github.com/anndvision/quince}.

\begin{table}[ht]
    \centering
    \begin{tabular}{ll}
        \toprule
        Hyper-parameter & Search Space \\
        \midrule
        hidden units        & [50, 100, 200, 400] \\
        network depth       & [1, 2, 3, 4, 5] \\
        negative slope      & [ReLU, 0.1, 0.2, 0.3, 0.4, 0.5, ELU] \\
        dropout rate        & [0.00, 0.10, 0.15, 0.20, 0.25, 0.50] \\
        spectral norm       & [None, 0.95, 1.0, 2.5, 3.0, 6.0] \\
        batch size          & [16, 32, 64, 100, 200] \\
        learning rate       & [2e-4, 5e-4, 1e-3] \\
        \bottomrule
    \end{tabular}
    \caption{Hyper-parameter search space}
    \label{tab:search_space}
\end{table}

\begin{table}[ht]
    \centering
    \begin{tabular}{lccc}
        \toprule
        Hyper-parameter & Simulated & HCMNIST & IHDP \\
        \midrule
        hidden units        & 200   & 200   & 200 \\
        network depth       & 4     & 2     & 4 \\
        negative slope      & ReLU  & ReLU  & LeakyReLU 0.3 \\
        dropout rate        & 0.10  & 0.15  & 0.5 \\
        spectral norm       & 6.0  & 3.0  & 6.0 \\
        batch size          & 32    & 200   & 200 \\
        learning rate       & 1e-3  & 5e-4  & 5e-4 \\
        \bottomrule
    \end{tabular}
    \caption{Final hyper-parameters for each dataset}
    \label{tab:hparams}
\end{table}

\subsection{Simulated Data}
\label{a:implementation_simulated}

As a reminder, we need parametric models for the distribution over outcomes $p(\Y \mid \x, \ti, \w)$ and the nominal propensity $\ehw(\x)$.
For $p(\Y \mid \x, \ti, \w)$, we use a 4 hidden layer mixture density network (MDN) \citep{bishop1994mixture} with 5 mixture components.
The 1D treatment variable $\ti$ and 1D covariate $\x$ are concatenated to make a 2D network input.
Each hidden layer is comprised of a 100 neuron linear transformation, followed by a ReLU activation function. 
The MDN parameters are inferred using a linear layer to predict the 5 means, a linear layer followed by a softplus activation to predict the square root of the 5 variances, and a linear layer to predict the logits of the 5 mixture components. 
We use the pytorch MixtureSameFamily distribution, with mixture\_distribution=Categorical(.), and component\_distribution=Normal(.) \cite{paszke2019pytorch}.
The objective function optimized is the negative log likelihood for the label $\y$ of the above distribution with the parameters predicted from ($\mathrm{x}, \ti$).
Dropout is applied to the inputs of each layer after the input layer with a rate of 0.1.
Spectral normalization is applied to the weights of the networks with value 6.0.
For $\ehw(\x)$, we use a 4 hidden layer neural network with Bernoulli likelihood.
Each hidden layer is comprised of a 200 neuron linear transformation, followed by a ReLU activation function.
Spectral normalization is applied to the weights of the networks with value 6.0.
The objective function optimized is the negative log likelihood for the observed treatment $\ti$ of the Bernoulli distribution with the logits predicted from $\mathrm{x}$.
For both models, We use Adam optimization with default pytorch parameters \citep{kingma2017adam}.
We use a batch size of 32.
We use early stopping based on the objective function value on the validation set with a patience of 20 epochs and train for a maximum of 500 epochs.
We train an ensemble of 10 models as an estimation of Bayesian model averaging.
At test time, we do 10 MC samples, corresponding to a forward pass of each model in the ensemble for $\w$ and 100 MC samples for $\y$, for each model under $\ti=0$ and $\ti=1$.

\begin{figure}[ht]
    \centering
    \begin{subfigure}{0.5\textwidth}
        \centering
        \includegraphics[width=\linewidth]{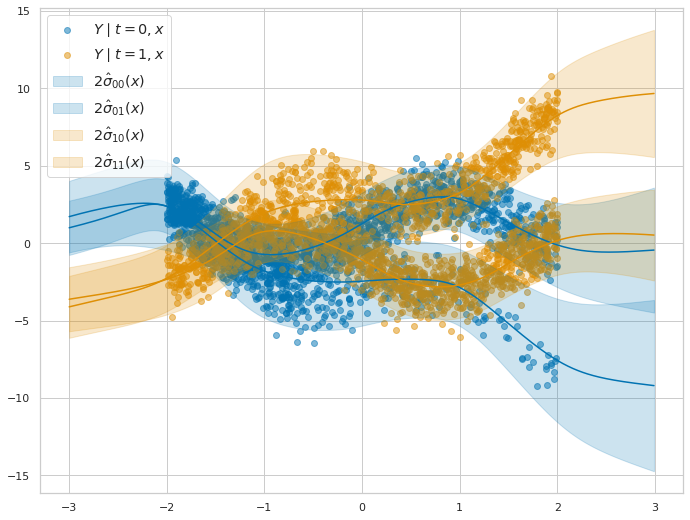}
        \vspace{-4mm}
    \end{subfigure}
    \caption{\textbf{Hidden confounding induces a multi-modal distribution in $\y$ at $\mathrm{x}$}}
    \label{fig:multi_modal}
\end{figure}

\paragraph{Hyper-parameter selection}
The hyper parameter search space is given in Table \ref{tab:search_space} and a summary of the final hyper-parameters used are given in Table \ref{tab:hparams} under the column Simulated.
Because the hidden confounding is a binary variable, it induces a bi-modal distribution in $\y$ at $\x$, as shown in Figure \ref{fig:multi_modal}. In practice, we would not know the form of the distribution of $\y$ at $\x$. To this end we select 5 mixture components for the MDN to show that we can over estimate the true modality, and still obtain sensible results. Alternatively, the validation set could be used to find the number of components that minimizes negative log likelihood of the data. The number of MC samples are chosen based on the stability of network predictions, i.e. we increase the number of MC samples until the variances with respect to $\w$ or $\y$ no longer change significantly.

\subsection{HC-MNIST}
\label{a:implementation_hcmnist}

For $p(\Y \mid \x, \ti, \w)$, we use a ResNet CNN feature extractor with 2 residual blocks, followed by a 2 hidden layer MDN with 5 mixture components.
The 1D treatment variable $\ti$ and ResNet output are concatenated to make a 49 dimensional MDN input.
Each hidden layer in the MDN is comprised of a 200 neuron linear transformation, followed by a ReLU activation function. 
The MDN parameters are inferred using in the same manner as for the Simulated data above.
The objective function optimized is the negative log likelihood for the label $\y$ of the above distribution with the parameters predicted from ($\mathrm{x}, \ti$).
Dropout is applied to the inputs of each layer after the input layer with a rate of 0.15.
Spectral normalization is applied to the weights of the network with value 3.0.
For $\ehw(\x)$, we use a ResNet CNN feature extractor with 2 residual blocks, followed by a 2 hidden layer neural network with Bernoulli likelihood.
Each hidden layer of the Neural Network is comprised of a 200 neuron linear transformation, followed by a ReLU activation function.
Dropout is applied to the inputs of each layer after the input layer with a rate of 0.15.
Spectral normalization is applied to the weights of the network with value 3.0.
The objective function optimized is the negative log likelihood for the observed treatment $\ti$ of the Bernoulli distribution with the logits predicted from $\mathrm{x}$.
For both models, We use Adam optimization with a learning rate of 5e-4 \citep{kingma2017adam}.
We use a batch size of 200.
We use early stopping based on the objective function value on the validation set with a patience of 20 epochs and train for a maximum of 500 epochs.
We train an ensemble of 5 models as an estimation of Bayesian model averaging.
At test time, we do 5 MC samples, corresponding to a forward pass of each model in the ensemble for $\w$ and 100 MC samples for $\y$, for each model under $\ti=0$ and $\ti=1$.

\subsection{IHDP Hidden Confounding}
\label{a:implementation_hcihdp}

For $p(\Y \mid \x, \ti, \w)$, we use a neural network feature extractor with 4 hidden layers, followed by a 2 hidden layer MDN with 5 mixture components.
The 1D treatment variable $\ti$ and feature extractor output are concatenated to make a 201 dimensional MDN input.
Each hidden layer in the feature extractor and MDN is comprised of a 200 neuron linear transformation, followed by an LeakyReLU activation function. 
The MDN parameters are inferred using in the same manner as for the Simulated data above.
The objective function optimized is the negative log likelihood for the label $\y$ of the above distribution with the parameters predicted from ($\mathrm{x}, \ti$).
Dropout is applied to the inputs of each layer after the input layer with a rate of 0.5.
Spectral normalization is applied to the weights of the network with value 6.0.
For $\ehw(\x)$, we use a neural network feature extractor with 3 hidden layers, followed by a 2 hidden layer neural network with Bernoulli likelihood.
Each hidden layer of the Neural Network is comprised of a 200 neuron linear transformation, followed by a ELU activation function.
Dropout is applied to the inputs of each layer after the input layer with a rate of 0.5.
Spectral normalization is applied to the weights of the network with value 6.0.
The objective function optimized is the negative log likelihood for the observed treatment $\ti$ of the Bernoulli distribution with the logits predicted from $\mathrm{x}$.
For both models, We use Adam optimization with a learning rate of 5e-4 \citep{kingma2017adam}.
We use a batch size of 200.
We use early stopping based on the objective function value on the validation set with a patience of 20 epochs and train for a maximum of 500 epochs.
We train an ensemble of 10 models as an estimation of Bayesian model averaging.
At test time, we do 10 MC samples, corresponding to a forward pass of each model in the ensemble for $\w$ and 100 MC samples for $\y$, for each model under $\ti=0$ and $\ti=1$.


\end{document}